\documentclass{article} 
\usepackage{nips15submit_e,times}

\usepackage{url}

\usepackage{times}
\usepackage{epsfig}
\usepackage{graphicx}
\usepackage{amsmath}
\usepackage{amssymb}
\usepackage{bbold}
\usepackage{dsfont}

\usepackage{enumerate}

\usepackage[toc,page]{appendix}
\usepackage[font=small,labelfont=bf]{caption}

\usepackage{xcolor}
\usepackage{amsthm}
\usepackage{amsfonts}
\usepackage{caption}
\usepackage{subcaption}
\usepackage{bm}
\usepackage{isomath}
\usepackage[numbers]{natbib}
\usepackage{footmisc}
\usepackage{stmaryrd}
\usepackage{fixltx2e}
\usepackage{dblfloatfix}
\usepackage{pbox}
\usepackage{capt-of}
%
\usepackage[normalem]{ulem}
\usepackage{multirow}

\usepackage{colortbl}

\makeatletter
\@namedef{ver@everyshi.sty}{}
\makeatother
\usepackage{pgf}

\newcommand{\xpt}{\edef\f@size{\@xpt}\rm}

\nipsfinaltrue

\def\ie{\emph{i.e.}}

\renewcommand\vec[1]{\ensuremath\boldsymbol{#1}}
\renewcommand\cdots{...}

\newcommand{\vy}{\mathbf{y}}

\newcommand{\tX}{\vec{\mathcal{X}}}

\newcommand{\mX}{\mathbf{X}}
\newcommand{\vx}{\mathbf{x}}

\newcommand{\mbr}[1]{\mathbb{R}^{#1}}

\newcommand{\vv}{\mathbf{v}}

\newcommand{\idx}[1]{\mathcal{I}_{#1}}
\newcommand{\semipd}[1]{\mathcal{S}_{+}^{#1}}
\newcommand{\spd}[1]{\mathcal{S}_{++}^{#1}}

\newcommand{\vzeta}{\boldsymbol{\zeta}}
\newcommand{\vc}{\mathbf{c}}

\newcommand{\vphi}{\boldsymbol{\phi}}

\newcommand{\bigoh}{\mathcal{O}}
\newcommand{\mPsi}{\vec{\Psi}}
\newcommand{\vj}{\vec{j}}

\newcommand{\enorm}[1]{\left\|{#1}\right\|_2}

\newcommand{\set}[1]{\left\{#1\right\}}

\DeclareMathOperator*{\sym}{Sym}

\DeclareMathOperator*{\trace}{Tr}
\DeclareMathOperator*{\kronstack}{\uparrow\!\otimes}

\DeclareMathOperator*{\avg}{avg}

\newcommand{\expl}[1]{\text{e}^{#1}}
\DeclareMathOperator*{\res}{Res}
\DeclareMathOperator*{\asinh}{Asinh}

\newcommand{\suptensorr}[2]{\mathfrak{S}^{#1}_{\times^{#2}}}

\newtheorem{proposition}{Proposition}
\newtheorem{remark}{Remark}

\newcommand{\mLambda}{\bm{\lambda}}
\newcommand{\mU}{\bm{U}}

\newcommand{\vphibar}{\boldsymbol{\bar{\phi}}}

\def\eg{\emph{e.g.}}

\newcommand{\mygthree}[1]{\boldsymbol{\mathcal{G}}\!\left(\!#1\!\right)}

\newcommand{\tG}{\boldsymbol{\mathcal{G}}}

\newcommand{\mIdent}{\boldsymbol{\mathds{I}}}
\newcommand{\sIdent}{\mathds{I}}
\newcommand{\vOnes}{\mathbb{1}}

\newcommand{\mJ}{\mathbf{J}}

\newcommand{\mC}{\mathbf{C}}

\newcommand{\tNnb}{\mathcal{N}}

\newcommand{\mPhi}{\boldsymbol{\Phi}}

\newcommand{\mM}{\boldsymbol{M}}

\newcommand{\mW}{\boldsymbol{W}}
\newcommand{\mD}{\boldsymbol{D}}
\newcommand{\mT}{\boldsymbol{T}}

\newcommand{\vmu}{\boldsymbol{\mu}}

\newcommand{\vvarphi}{\boldsymbol{\varphi}}

\newcommand{\stkout}[1]{{\ifmmode\text{\sout{\ensuremath{#1}}}\else\sout{#1}\fi}}

\DeclareMathOperator*{\arcsinh}{arcsinh}
\newcommand{\comment}[1]{}

\title{\Large A Deeper Look at Power Normalizations}

\author{Piotr Koniusz\thanks{Both authors contributed equally.\newline\indent\indent$\!\!$This work is published in CVPR'18. Please respect the authors' efforts by not copying/plagiarizing bits and pieces of this work for your own gain. If you find anything inspiring in this work, be kind enough to cite it.}\textsuperscript{$\;\,$,1,2}\qquad Hongguang Zhang\textsuperscript{$*$,2,1}\qquad Fatih Porikli\textsuperscript{2}\\
$^1$Data61/CSIRO, $^2$Australian National University\\
firstname.lastname@\{data61.csiro.au\textsuperscript{1}, anu.edu.au\textsuperscript{2}\}
}


\newcommand\keywords[1]{}
\setlength{\parskip}{0.4em}

\nipsfinalcopy 

\begin{document}

\maketitle

\def\arxiv{arxiv}
\begin{abstract}
Power Normalizations (PN) are very useful non-linear operators in the context of Bag-of-Words data representations as they tackle problems such as feature imbalance. In this paper, we reconsider these operators in the deep learning setup by introducing a novel layer that implements PN for non-linear pooling of feature maps. Specifically, by using a kernel formulation, our layer combines the feature vectors and their respective spatial locations in the feature maps produced by the last convolutional layer of CNN. Linearization of such a kernel results in a positive definite matrix capturing the second-order statistics of the feature vectors, to which PN operators are applied. We study two types of PN functions, namely (i) MaxExp and (ii) Gamma, addressing their role and meaning in the context of non-linear pooling. We also provide a probabilistic interpretation of these operators and derive their surrogates with well-behaved gradients for end-to-end CNN learning. We apply our theory to practice by implementing the PN layer on a ResNet-50 model and showcase experiments on four benchmarks for fine-grained recognition, scene recognition, and material classification. Our results demonstrate state-of-the-art performance across all these tasks.
\end{abstract}

\ifdefined\arxiv
\else
\begin{figure*}[t]
\vspace{-0.15cm}
\centering
%
\centering\includegraphics[trim=0 0 0 0, clip=true, width=14.4cm]{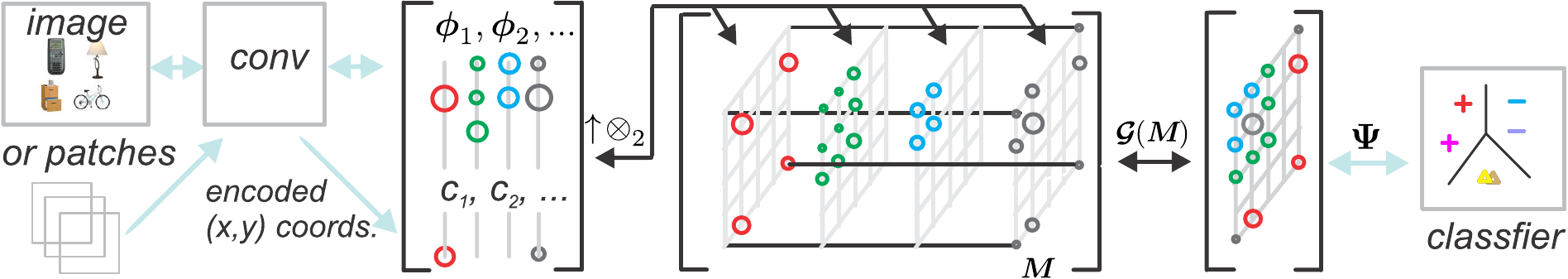}
%
%
\vspace{-0.2cm}
\caption{Our end-to-end pipeline. We pass an image (or patches) to CNN and extract feature vectors $\vphi$ from its last conv. layer and augment them by encoded spatial coordinates $\vc$. We perform pooling on second-order matrix $\mM$ by the Power Normalization function $\tG$.}\vspace{-0.3cm}
\label{fig:principle}
\end{figure*}
\fi

\section{Introduction}
\label{sec:intro}
Second-order statistics of data features have played a pivotal role in advancing the state of the art on several problems in computer vision, including object recognition, texture categorization, action representation, and human tracking, to name a few of applications \cite{tuzel_rc,porikli2006tracker,wang2011tracking,locallog-euclidean,guo2013action,carreira_secord,me_tensor}. For example, in the popular region covariance descriptors~\cite{tuzel_rc}, a covariance matrix, which is computed over multi-modal features from image regions, is used as an object representation for recognition and tracking, and has been extended to several other applications~\cite{tuzel_rc, porikli2006tracker,wang2011tracking,locallog-euclidean,guo2013action}. Given Bag-of-Words histograms or local descriptor vectors from an image, a second-order co-occurrence pooling of these vectors captures the occurrences of two features together. Such a strategy has been recently shown to result in a superior performance in semantic segmentation and visual concept detection, compared to their first-order counterparts~\cite{carreira_secord,me_tensor_tech_rep,me_tensor}. A natural extension led to higher-order pooling operators~\cite{me_tensor_tech_rep,me_tensor,sparse_tensor_cvpr} on third-order super-symmetric tensors which improve results over the second-order descriptors over 7\% MAP on PASCAL VOC07.

However, second and higher-order statistics require appropriate aggregation and pooling mechanisms to obtain the highest classification results \cite{carreira_secord,me_tensor_tech_rep,me_tensor}. Once the statistics are captured in the matrix form, they undergo next a non-linearity such as Power Normalization \cite{me_ATN} which role is to reduce/boost contributions from frequent/infrequent visual stimuli in an image, respectively. A significant progress made by the Bag-of-Words model provides numerous insights into the role played by pooling during the aggregation step. The theoretical relation between Average and Max-pooling was studied in~\cite{boureau_midlevel}. A detailed likelihood-based analysis of feature pooling was conducted in~\cite{boureau_pooling} which led to a \emph{theoretical expectation of Max-pooling}, improving overall classification results. Power Normalization has also been applied to Average pooling by Fisher Kernels~\cite{perronnin_fisherimpr}. Max-pooling has been recognized as a lower bound of the likelihood of \emph{`at least one particular visual word being present in an image'}~\cite{liu_sadefense}. According to an evaluation \cite{me_ATN}, these pooling methods are all closely related. However, evaluations \cite{me_ATN} do not consider the second-order pooling scenario or end-to-end learning. In the context of second-order pooling, element-wise and eigenvalue Power Normalization (ePN) were both first proposed in \cite{me_tensor_tech_rep} in 2013.

In this paper, we aim to revisit the above pooling methods 
in end-to-end setting and shed further light on their interpretation in the context of second-order matrices. Firstly, we propose a kernel formulation which combines feature vectors collected from the last convolutional layer of ResNet-50 together with so-called spatial location vectors,
previously explored in \cite{me_SCC,me_ATN,me_tensor_tech_rep} around 2011--2013, 
which contain spatial locations corresponding to feature vectors in the CNN feature maps. A linearization of such a kernel results in a second-order matrix which contains aggregated second-order statistics of these combined vectors. Subsequently, we focus on the role of the Power Normalization family in end-to-end setting. We show that these functions have a well-founded probabilistic interpretation in the context of second-order statistics. Moreover, we propose PN surrogates which have well-behaved derivatives suitable for back-propagation unlike typical PN functions.

Our contributions are three-fold: (i) we propose to aggregate feature vectors extracted from CNNs and their spatial coordinates into a second-order matrix by principled derivations in end-to-end manner, 
(ii) we revisit Power Normalization functions, derive them for second-order representations and show that they follow Binomial or Multinomial distributions if features are drawn from the Brenoulli distribution, 
(iii) we propose PN surrogates with well-behaved derivatives for end-to-end learning, (iv) we propose new spectral variants of pooling. Figure \ref{fig:principle} shows our pipeline.

We perform evaluations on ResNet-50 and four image classification benchmarks such as Flower102, MIT67, FMD and Food101 where we demonstrate state-of-the-art results.

\ifdefined\arxiv
\begin{figure*}[t]
\centering
%
\centering\includegraphics[trim=0 0 0 0, clip=true, width=14.0cm]{images/principle.pdf}
%
%
\vspace{-0.2cm}
\caption{Our end-to-end pipeline. We pass an image (or patches) to CNN and extract feature vectors $\vphi$ from its last conv. layer and augment them by encoded spatial coordinates $\vc$. We perform pooling on second-order matrix $\mM$ by the Power Normalization function $\tG$.}\vspace{-0.3cm}
\label{fig:principle}
\end{figure*}
\fi

\section{Related Work}
\label{sec:related_work}

Second-order statistics have been extensively studied in the context of texture recognition \cite{tuzel_rc,tuzel2008detection, elbcm_brod} by the use of so-called Region Covariance Descriptors (RCD).

\vspace{0.05cm}
\noindent{\textbf{Region Covariance Descriptors (RCD).}} 
Such methods use a representation which typically captures co-occurrences of luminance, first- and/or second-order derivatives of texture patterns. Alternatively, co-occurrences in Local Binary Patterns (LBP) are captured to build second-order matrices \cite{elbcm_brod}. RCD approaches have also been successfully applied to tracking \cite{porikli2006tracker}, semantic segmentation \cite{carreira_secord} and object category recognition \cite{me_tensor}, to name but a few of applications. 
The design of RCD typically requires a decision on what signals need to be aggregated into the second-order representation and how to compare positive (semi-)definite datapoints resulting from such an aggregation step. There exist several non-Euclidean distances often applied to positive definite matrices which we list next.

\vspace{0.05cm}
\noindent{\textbf{Non-Euclidean distances.}} 
The distance between two positive definite datapoints is typically measured according to the Riemannian geometry while Power-Euclidean distances \cite{dryden_powereuclid} extend to positive semi-definite distances. 
In particular, Affine-Invariant Riemannian Metric \cite{PEN06,bhatia_pdm}, KL-Divergence Metric ({\em KLDM}) \cite{wang_jeffreys}, Jensen-Bregman LogDet Divergence ({\em JBLD}) \cite{anoop_logdet} and Log-Euclidean ({\em LogE}) \cite{arsigny2006log} have been used in the context of diffusion imaging and the RCD-based methods. 
%
Dictionary and metric learning methods also use non-Euclidean distances \cite{mehrtash_dict_manifold,mehrtash_kernels2,beyond_gauss,Roy_CVPR_2018,Kumar_CVPR_2018}. 

Our approach differs 
in that we perform end-to-end learning in the CNN setting while RCD and dictionary learning constitute shallow architectures that perform worse than CNNs on the majority of classification tasks. 

We note that the Log-Euclidean distance and Power Normalization have been implemented in the CNN setting \cite{sminchisescu_matrix,vangol_riem_net,secord_peihua_li,lin2017improved} for the purpose of region classification. 
 These methods employ back-propagation 
which requires costly eigenvalue decomposition for computations of derivatives deeming them computationally inefficient. Note that the cost of a single eigenvalue decomposition is at least $\bigoh(d^\omega)$, where constant $2\!<\!\omega\!<\!2.376${\color{red}\footnotemark[1]}. The typical bottleneck in using non-Euclidean distances in end-to-end setting lies in their costly back-propagation rules.

\footnotetext[1]{\label{foot:complexity1}We assume that the eigenvalue decomposition of large matrices ($d\!=\!4096$) in CUDA BLAS is fast and efficient--which is not the case.}

Our work differs in that we make an i.i.d. assumption on our co-occurrence features in our second-order representation. Thus, we require only element-wise rather than spectral operations.  
This reduces the complexity and relies on trivial arithmetic operations easy to implement on GPU.

\vspace{0.05cm}
\noindent{\textbf{Pooling and CNNs.}} 
There exist several approaches for image retrieval and recognition which perform some form of aggregation over first-order statistics extracted from the CNN maps \eg, \cite{orderless_pooling,deep_aggreg,netvlad}. In \cite{orderless_pooling}, the authors propose to extract multiple regions from an image and aggregate CNN responses into an image representations. In \cite{deep_aggreg}, the authors aggregate local deep features for the task of image retrieval. In \cite{netvlad}, the authors extend Vector of Locally Aggregated Descriptors (VLAD) to an end-to-end trainable system.

Our approach differs in that we use co-occurrences in end-to-end setting and take an analytical look at how to interpret Power Normalization functions in this setting.

There has been also a revived interest in creating co-occurrence patterns in CNN setting similar in spirit to RCD. Approach \cite{bilinear_finegrained} applies a fusion of two CNN streams via outer product in the context of the fine-grained image recognition. Another approach for face recognition \cite{face_cooc} uses co-occurrences of CNN feature vectors and facial attribute vectors to obtain state-of-the-art face recognition results. A recent approach \cite{deep_cooc} extracts feature vectors at two separate locations in feature maps and performs an outer product to form a CNN co-occurrence layer.

In contrast to these papers, we use symmetric positive (semi-)definite matrices rather than negative definite ones.

\vspace{0.05cm}
\noindent{\textbf{Power Normalizations.}} 
Practical image representations have to deal with the so-called burstiness which is `{\em the property that a given visual element appears more times in an image than a statistically independent model would predict}' \cite{jegou_bursts}. Power Normalization~\cite{boughorbel_intersect, perronnin_fisherimpr, jegou_bursts} is known to suppress this burstiness and has been extensively studied and evaluated in the context of Bag-of-Words \cite{me_ATN,me_tensor}. 
The theoretical relation between Average and Max-pooling was studied in~\cite{boureau_midlevel} which highlighted the underlying statistical reasons for the superior performance of Max-pooling compared to a mere average of feature vectors. An analysis of feature pooling was conducted in~\cite{boureau_pooling} under specific assumptions on distributions from which the aggregated features are drawn.  A relationship between the likelihood of `\emph{at least one particular visual word being present in an image}' and Max-pooling was studied in \cite{liu_sadefense}. According to a survey \cite{me_ATN}, these Power Normalization functions are closely related.

We take a similar view on PN functions, however, we devise an end-to-end trainable CNN layer and derive new pooling functions with well-behaved derivatives. We follow theoretical foundations of the Power Normalization family.

\section{Background}
\label{sec:background}

Below we review our notations and the background on kernel linearizations and the Power Normalization family. 

\subsection{Notations}
\label{sec:notations}
%
Let $\vx\in\mbr{d}$ be a $d$-dimensional feature vector. Then we use $\tX\!=\!{\kronstack}_r\,\vx$ to denote the $r$-mode super-symmetric rank-one tensor $\tX$ generated by the $r$-th order outer-product of $\vx$, where the element of $\tX\!\in\!\suptensorr{d}{r}$ at the $\left(i_1,i_2,\cdots, i_{r}\right)$-th index is given by $\Pi_{j=1}^r x_{i_j}$. $\idx{N}$ stands for the index set $\set{1, 2,\cdots,N}$. 
%
%
The spaces of symmetric positive semidefinite and definite matrices are $\semipd{d}$ and $\spd{d}$. Moreover, $\sym(\mX)\!=\!\frac{1}{2}(\mX\!+\!\mX^T\!)$. 
A vector with all coefficients equal one is denoted by $\vOnes$, $\vj_m$ is a vector of all zeros except for the $m$-th coefficient which is equal one, and $\mJ_{mn}$ is a matrix of all zeros with a value of one at the position $(m,n)$. 
Moreover, $\odot$ is the Hadamard product (element-wise multiplication). We use the MATLAB notation $\vv\!=\![\text{begin}\!:\!\text{step}\!:\!\text{end}]$ to generate a  vector $\vv$ with elements starting as {\em begin}, ending as {\em end}, with stepping equal {\em step}. Operator `$;$' in $[\vx; \vy]$ denotes the concatenation of vectors $\vx$ and $\vy$ (or scalars).

%

\subsection{Kernel Linearization}
\label{sec:kernel_linearization}
In the sequel, we will use kernel feature maps 
detailed below to embed  $(x,y)$  locations of feature vectors extracted from conv. CNN maps at $(x,y)$  into a non-linear Hilbert space. Such locations are called {\em spatial coordinates} \cite{me_SCC,me_tensor}.$\!\!$ 

\begin{proposition}
\label{pr:gaus_lin}
Let $G_{\sigma}(\vx\!-\!\vy)=\exp(-\!\enorm{\vx\!-\!\vy}^2/{2\sigma^2})$ denote a Gaussian RBF kernel centered at $\vy$ and having a bandwidth $\sigma$. Kernel linearization refers to rewriting $G_{\sigma}$ as an inner-product of two (in)finite-dimensional feature maps which we obtain via probability product kernels \cite{jebara_prodkers}.
Specifically, we employ the inner product of $d'$-dimensional isotropic Gaussians given $\vx,\vy\!\in\!\mbr{d'}\!$ as follows: 
\begin{align}
&\!\!\!\!\!\!\!G_{\sigma}\!\left(\vx\!-\!\vy\right)\!\!=\!\!\left(\frac{2}{\pi\sigma^2}\right)^{\!\!\frac{d'}{2}}\!\!\!\!\!\!\int\limits_{\vzeta\in\mbr{d'}}\!\!\!\!G_{\sigma/\sqrt{2}}\!\!\left(\vx\!-\!\vzeta\right)G_{\sigma/\sqrt{2}}(\vy\!\!-\!\vzeta)\,\mathrm{d}\vzeta.
\label{eq:gauss_lin}
\end{align}
Eq. \eqref{eq:gauss_lin} can be approximated by replacing the integral with the sum over $Z$ pivots $\vzeta_1,\cdots,\vzeta_Z$. Thus, we obtain: 
\begin{align}
&\!\!\!\!\!\!\!\vvarphi(\vx)=\left[{G}_{\sigma/\sqrt{2}}(\vx-\vzeta_1),\cdots,{G}_{\sigma/\sqrt{2}}(\vx-\vzeta_Z)\right]^T,\!\!\label{eq:gauss_lin2}\\
& \text{ and } G_{\sigma}(\vx\!-\!\vy)\approx\left<\sqrt{c}\vvarphi(\vx), \sqrt{c}\vvarphi(\vy)\right>,
\label{eq:gauss_lin3}
\end{align}
where $c$ is a constant. We refer to \eqref{eq:gauss_lin2} as a (kernel) feature map{\color{red}\footnotemark[3]} and to \eqref{eq:gauss_lin3} as the linearization of the RBF kernel. 
\end{proposition}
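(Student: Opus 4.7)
The plan is to verify the claim in two stages: first establish the exact integral identity \eqref{eq:gauss_lin}, which is a specialization of a probability product kernel for Gaussians, and then deduce the approximate inner-product form \eqref{eq:gauss_lin3} by quadrature.

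For the exact identity, I would write out both Gaussians inside the integral explicitly and combine their exponents:
\begin{align*}
G_{\sigma/\sqrt{2}}(\vx-\vzeta)\,G_{\sigma/\sqrt{2}}(\vy-\vzeta)
=\exp\!\Bigl(-\tfrac{1}{\sigma^{2}}\bigl(\enorm{\vx-\vzeta}^{2}+\enorm{\vy-\vzeta}^{2}\bigr)\Bigr).
\end{align*}
The core algebraic step is to complete the square in $\vzeta$: grouping the quadratic and linear terms gives
\begin{align*}
\enorm{\vx-\vzeta}^{2}+\enorm{\vy-\vzeta}^{2}
=2\bigl\|\vzeta-\tfrac{\vx+\vy}{2}\bigr\|_2^{2}+\tfrac{1}{2}\enorm{\vx-\vy}^{2}.
\end{align*}
Substituting back, the exponent splits into a part depending only on $\vx-\vy$ (which pulls outside the integral and reproduces $G_{\sigma}(\vx-\vy)$) and a pure Gaussian in $\vzeta$ with covariance $(\sigma^{2}/2)\mI$. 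Evaluating the latter using the standard formula $\int_{\mbr{d'}}\exp(-\tfrac{2}{\sigma^{2}}\|\vzeta-\vc\|_2^{2})\,\mathrm{d}\vzeta=(\pi\sigma^{2}/2)^{d'/2}$ shows it exactly cancels the prefactor $(2/\pi\sigma^{2})^{d'/2}$, yielding \eqref{eq:gauss_lin}.

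For the approximation step, I would treat the integral as a quadrature rule: choosing $Z$ pivots $\vzeta_{1},\ldots,\vzeta_{Z}$ (e.g.\ on a grid or sampled i.i.d.) and replacing the integral by a uniform-weight finite sum produces
\begin{align*}
G_{\sigma}(\vx-\vy)\;\approx\;c\sum_{z=1}^{Z}G_{\sigma/\sqrt{2}}(\vx-\vzeta_{z})\,G_{\sigma/\sqrt{2}}(\vy-\vzeta_{z}),
\end{align*}
where $c$ absorbs the prefactor $(2/\pi\sigma^{2})^{d'/2}$ together with the quadrature weight/volume element. Recognizing the right-hand side as a Euclidean inner product of the $Z$-dimensional vectors defined in \eqref{eq:gauss_lin2}, and distributing the scalar $c$ symmetrically between the two factors as $\sqrt{c}$, gives exactly \eqref{eq:gauss_lin3}.

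The only nontrivial calculation is the completion of the square and the matching of normalization constants; both are routine. The genuine content of the proposition is conceptual rather than technical: it is the recognition that the truncated sum reinterprets the Gaussian kernel as a finite-dimensional inner product, and so I would not expect any substantial obstacle—just care with bookkeeping of the $\sigma/\sqrt{2}$ bandwidth and the $(2/\pi\sigma^{2})^{d'/2}$ factor.
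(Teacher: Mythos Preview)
Your proposal is correct and is precisely the probability-product-kernel derivation the paper invokes; the paper itself does not carry out the computation but simply cites \cite{jebara_prodkers} (Section~3.1) for the details. Your explicit completion of the square and matching of the normalization constant is exactly what that reference contains, so there is nothing to add or correct.
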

\begin{proof}
The Gaussian kernel can be rewritten as a probability product kernel. See \cite{jebara_prodkers} (Section 3.1) for derivations.$\!\!\!$
\end{proof}

\subsection{Second- and Higher-order Tensors}
\label{sec:som}
Below we show that second- or higher-order tensors emerge from a linearization of sum of Polynomial kernels.
\begin{proposition}
\label{pr:linearize}
Let $\mPhi_A\equiv\{\vphi_n\}_{n\in\tNnb_{\!A}}$, $\mPhi_B\!\equiv\{\vphi^*_n\}_{n\in\tNnb_{\!B}}$ be datapoints from two images $\Pi_A$ and $\Pi_B$, and $N\!=\!|\tNnb_{\!A}|$ and $N^*\!\!=\!|\tNnb_{\!B}|$ be the numbers of data vectors \eg, obtained from the last convolutional feature map of CNN for images $\Pi_A$ and $\Pi_B$. Tensor feature maps result from a linearization of the sum of Polynomial kernels of degree $r$:
\begin{align}
& K(\mPhi_A, \mPhi_B)\!=\!\left<\mPsi(\mPhi_A),\mPsi(\mPhi_B)\right>\!=\label{eq:hok1}\\
& \!\!\!\!\frac{1}{NN^*\!}\!\!\sum\limits_{
n\in \tNnb_{\!A}}\sum\limits_{n'\!\in \tNnb_{\!B}\!}\!\!\!\left<\vphi_n, \vphi^*_{n'}\right>^r\!\text{ where } 
\mPsi(\mPhi)\!=\!\frac{1}{N}\sum\limits_{
n\in \tNnb}{\kronstack}_r\,\vphi_n.\!\!\nonumber
\end{align}
\end{proposition}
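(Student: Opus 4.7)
The plan is to expand $\langle \mPsi(\mPhi_A), \mPsi(\mPhi_B) \rangle$ using the definition of $\mPsi$ and then reduce the claim to a single algebraic identity about inner products of rank-one symmetric tensors. Since the tensor inner product on $\suptensorr{d}{r}$ coincides with the standard Frobenius/flattened inner product on $\mbr{d^r}$, it is bilinear in its two arguments. This lets me pull the normalizing constants $1/N$ and $1/N^*$ and both summations over $\tNnb_{\!A}$ and $\tNnb_{\!B}$ outside the inner product, reducing the left-hand side to $\frac{1}{NN^*}\sum_{n}\sum_{n'}\langle{\kronstack}_r\vphi_n, {\kronstack}_r\vphi^*_{n'}\rangle$.

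The heart of the argument is therefore the identity
\begin{equation}
\left\langle {\kronstack}_r\vphi,\; {\kronstack}_r\vphi^*\right\rangle \;=\; \langle \vphi, \vphi^*\rangle^r,
\label{eq:key_id}
\end{equation}
valid for arbitrary $\vphi,\vphi^*\!\in\!\mbr{d}$. To establish \eqref{eq:key_id}, I would invoke the entry-wise definition given in the notation section: the $(i_1,\ldots,i_r)$-entry of ${\kronstack}_r\vphi$ is $\prod_{j=1}^r \phi_{i_j}$. Substituting this into the flattened inner product produces an $r$-fold sum $\sum_{i_1,\ldots,i_r}\prod_{j} \phi_{i_j}\phi^*_{i_j}$ whose summand factorizes across indices, so the $r$-fold sum separates into a product of $r$ identical one-dimensional sums, each equal to $\langle\vphi,\vphi^*\rangle$. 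This yields \eqref{eq:key_id} in one line.

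Combining the bilinearity step with \eqref{eq:key_id} gives exactly the claimed formula $K(\mPhi_A,\mPhi_B)=\frac{1}{NN^*}\sum_{n\in\tNnb_{\!A}}\sum_{n'\in\tNnb_{\!B}}\langle\vphi_n,\vphi^*_{n'}\rangle^r$, which is the linearization of the average of polynomial kernels of degree $r$ via the explicit feature map $\mPsi$.

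There is essentially no obstacle: the only point requiring care is making the identification between the abstract tensor inner product on $\suptensorr{d}{r}$ and the Euclidean inner product on its flattening explicit, so that bilinearity and the entrywise expansion are both legitimate. Once this identification is stated, the proposition follows from the two-line calculation above; no symmetry or combinatorial multiplicity factors appear because both feature maps use the same (unnormalized) outer product ${\kronstack}_r\vphi_n$, so the entries of the tensors match up term-by-term under the standard inner product.
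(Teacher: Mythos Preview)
Your proposal is correct and complete. The paper's own proof is merely the one-line citation ``See \cite{me_domain} for the details of such an expansion,'' so you have in fact supplied more than the paper does; the bilinearity reduction together with the entrywise factorization $\sum_{i_1,\ldots,i_r}\prod_j\phi_{i_j}\phi^*_{i_j}=\prod_j\sum_{i_j}\phi_{i_j}\phi^*_{i_j}=\langle\vphi,\vphi^*\rangle^r$ is exactly the standard argument one would expect to find in the cited reference.
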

\begin{proof}
See \cite{me_domain} for the details of such an expansion.
\end{proof}

\begin{remark}
In what follows, we will use second-order matrices obtained from the above expansion for $r\!=\!2$, that is:
\begin{align}
& \!\!\!\!\!\!\!\frac{1}{NN^*\!}\!\!\sum\limits_{
n\in \tNnb_{\!A}}\sum\limits_{n'\!\in \tNnb_{\!B}}\!\!\!\left<\vphi_n, \vphi^*_{n'}\right>^2\!\!=\!
\Big\langle\frac{1}{N}\sum\limits_{
n\in \tNnb_{\!A}}{\vphi_n\vphi_n^T}, \frac{1}{N^*\!}\sum\limits_{
n\in \tNnb_{\!B}}{\vphi^*_{n'}{\vphi^*_{n'}}^{\!\!\!T}}\Big\rangle.\!\!\label{eq:hok2}
\end{align}
\footnotetext[3]{\label{foot:maps}Note that (kernel) feature maps are not conv. CNN maps. They are two separate notions that happen to share the same name.}
Thus, we obtain the following (kernel) feature map{\color{red}\footnotemark[3]}:
\begin{align}
& \mPsi\left(\{\vphi_n\}_{n\in\tNnb}\right)=\tG\Big(\frac{1}{N}\sum_{n\in\mathcal{N}}\!\vphi_n\vphi_n^T\Big),\label{eq:hok3}
\end{align}
where $\mygthree{\,\mX}\!=\!\mX$ will be later replaced by various Power Normalization functions.
\end{remark}

\subsection{Power Normalization Family}
\label{sec:pn}

Max-pooling \cite{boureau_midlevel} can be derived by drawing features from the Bernoulli distribution under the i.i.d. assumption \cite{boureau_pooling} which leads to so-called {\em Theoretical Expectation of Max-pooling} ({\em MaxExp}) operator \cite{me_ATN} detailed below.

\begin{proposition}\label{prop:maxexp}
Assume a vector $\vphi\!\in\!\{0,1\}^{N}$ which stores $N$ outcomes of drawing from Bernoulli distribution under the i.i.d. assumption for which the probability $p$ of an event $(\phi_{n}\!=\!1)$ and $1\!-\!p$ for $(\phi_{n}\!=\!0)$ can be estimated as an expected value \eg, $p\!=\!\avg_n\phi_n$. Then the probability of at least one positive event in $\vphi$ from $N$ trials becomes:
\vspace{-0.2cm}
\begin{equation}
\label{eq:my_maxexp1}
\psi\!=\!1\!-\!(1\!-\!p)^{N}.
\end{equation}
\end{proposition}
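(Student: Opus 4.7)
The statement is an elementary probability fact, so the plan is to prove it directly from the complement rule under the i.i.d. assumption. I will introduce events $E_n = \{\phi_n = 1\}$ and $\bar{E}_n = \{\phi_n = 0\}$, and then express ``at least one positive event'' as the complement of ``all events are negative,'' so that
\begin{equation*}
\psi \;=\; \Pr\!\left(\bigcup_{n=1}^N E_n\right) \;=\; 1 - \Pr\!\left(\bigcap_{n=1}^N \bar{E}_n\right).
\end{equation*}

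Next, I will invoke the i.i.d.\ assumption, which lets the joint probability on the right factorize as a product. Since each marginal satisfies $\Pr(\bar{E}_n) = 1 - p$, I obtain
\begin{equation*}
\Pr\!\left(\bigcap_{n=1}^N \bar{E}_n\right) \;=\; \prod_{n=1}^N \Pr(\bar{E}_n) \;=\; (1-p)^N,
\end{equation*}
and substituting back yields $\psi = 1 - (1-p)^N$, matching \eqref{eq:my_maxexp1}.

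Since the estimator $p = \avg_n \phi_n$ is only used to specify how $p$ is plugged in downstream (not in the derivation itself), I would briefly remark that the identity holds for any $p \in [0,1]$ satisfying $\Pr(\phi_n=1)=p$, and that replacing $p$ by its empirical mean is then simply a plug-in step. There is no genuine obstacle here; the only subtlety worth flagging is that independence (not just identical distribution) is what allows the joint to factorize, which is why both parts of the i.i.d.\ assumption matter.
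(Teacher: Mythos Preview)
Your proposal is correct and follows essentially the same approach as the paper: compute the probability of the complementary event (all $N$ outcomes equal zero) via i.i.d.\ factorization to get $(1-p)^N$, then apply the complement rule. The paper additionally notes the equivalent binomial-sum expression $1-(1-p)^N=\sum_{n=1}^{N}\binom{N}{n}p^n(1-p)^{N-n}$, but this is a side remark rather than a different argument.
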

\begin{proof}
\label{pr:maxexp}
The proof follows the school syllabus for a fair coin toss. The probability of all $N$ outcomes to be $\{(\phi_{1}\!=\!0),\cdots,(\phi_{N}\!=\!0)\}$ amounts to $(1\!-\!p)^N$. 
The probability of at least one positive outcome 
$(\phi_{n}\!=\!1)$ amounts to applying the logical `or' $\{(\phi_{1}\!=\!1)\,|\cdots|\,(\phi_{N}\!=\!1)\}$ and leads to:
\vspace{-0.2cm}
\begin{equation}\label{eq:my_maxexp2}
1\!-\!(1\!-\!p)^{N}=\,\sum_{n=1}^{N} \binom{N}{n} p^n(1\!-\!p)^{N-n}.
\vspace{-0.5cm}
\end{equation}
\end{proof}
\begin{remark}
\vspace{-0.3cm}
\label{re:maxexp}
A practical implementation of this pooling strategy \cite{me_ATN} is given by $\psi_k\!=\!1\!-\!(1\!-\!\avg_n\phi_{kn})^{\eta}$, where $0\!<\!\eta\!\approx\!N$ is an adjustable parameter and $\phi_{kn}$ is a $k$-th feature of an $n$-th feature vector \eg, as defined in Prop. \ref{pr:linearize}, which is normalized to range 0--1.
\end{remark}
\begin{remark}
\label{re:pn}
It was shown in \cite{me_ATN} that Power Normalization ({Gamma}) given by $\psi_k\!=\!(\avg_n\phi_{kn})^\gamma$, where $0\!<\!\gamma\!\leq\!1$ is an adjustable parameter, is in fact an approximation of MaxExp.
\end{remark}
\section{Problem Formulation}
\label{sec:problem}

We start by devising our co-occurrence and pooling layers. 
We show that the Power Normalization ({\em Gamma}) has an ill-behaved derivative. Thus, we generalize MaxExp and Gamma \cite{me_ATN,me_tensor} to Logistic a.k.a. Sigmoid ({\em SigmE}) and the Arcsin hyperbolic ({\em AsinhE}) functions.

\ifdefined\arxiv
\newcommand{\PowH}{3.0cm}
\newcommand{\PowHB}{2.875cm}
\newcommand{\PowW}{3.65cm}
\else
\newcommand{\PowH}{3.6cm}
\newcommand{\PowHB}{3.4cm}
\newcommand{\PowW}{4.5cm}
\fi

\begin{figure*}[t]
\centering
\hspace{-0.3cm}
\begin{subfigure}[t]{0.24\linewidth}
\centering\includegraphics[trim=0 0 0 0, clip=true, height=\PowH, width=\PowW]{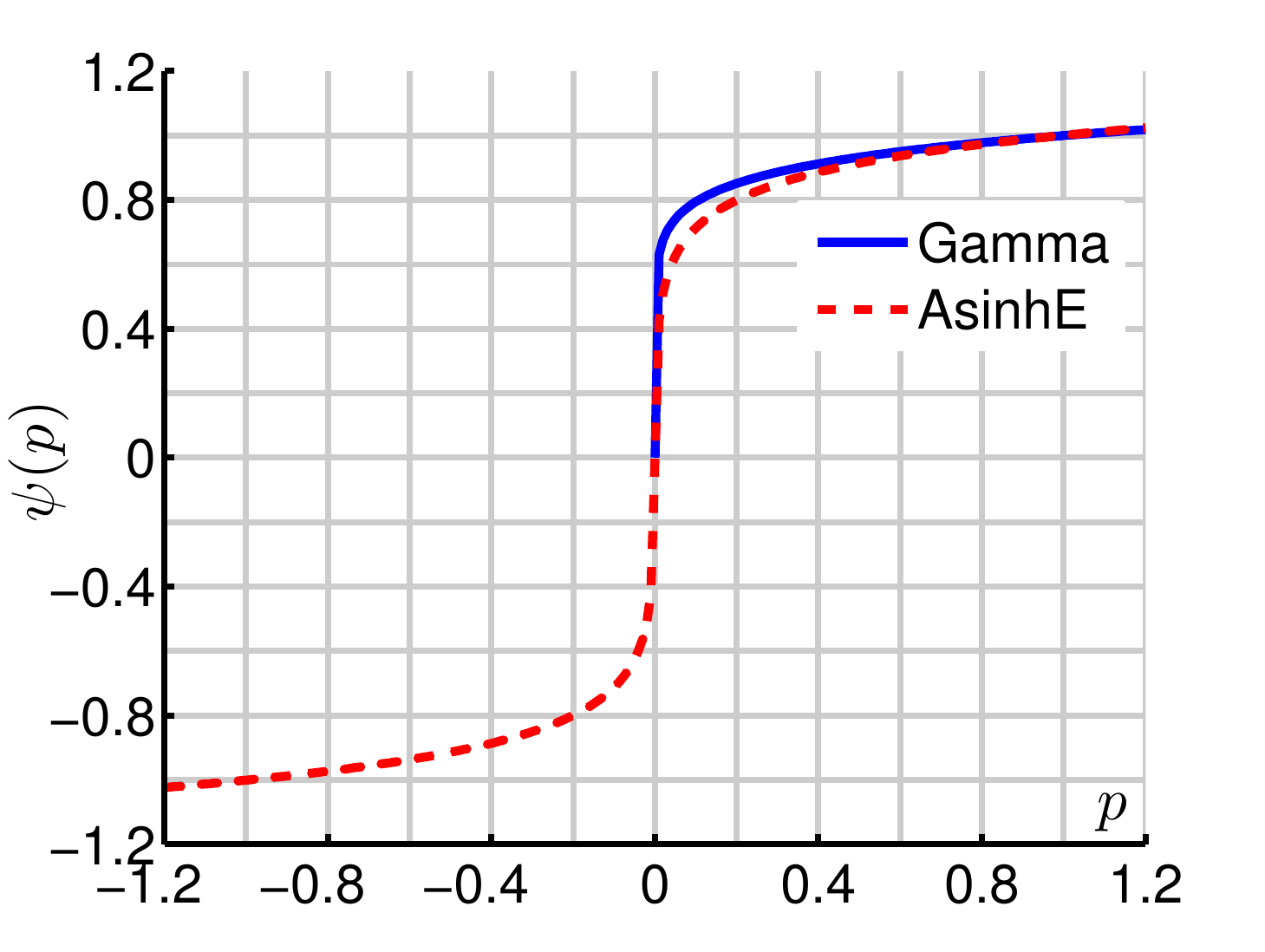}\vspace{-0.2cm}
\caption{\label{fig:pow1}}
\end{subfigure}
\begin{subfigure}[t]{0.24\linewidth}
\centering\includegraphics[trim=0 -18 0 15, clip=true, height=\PowHB, width=\PowW]{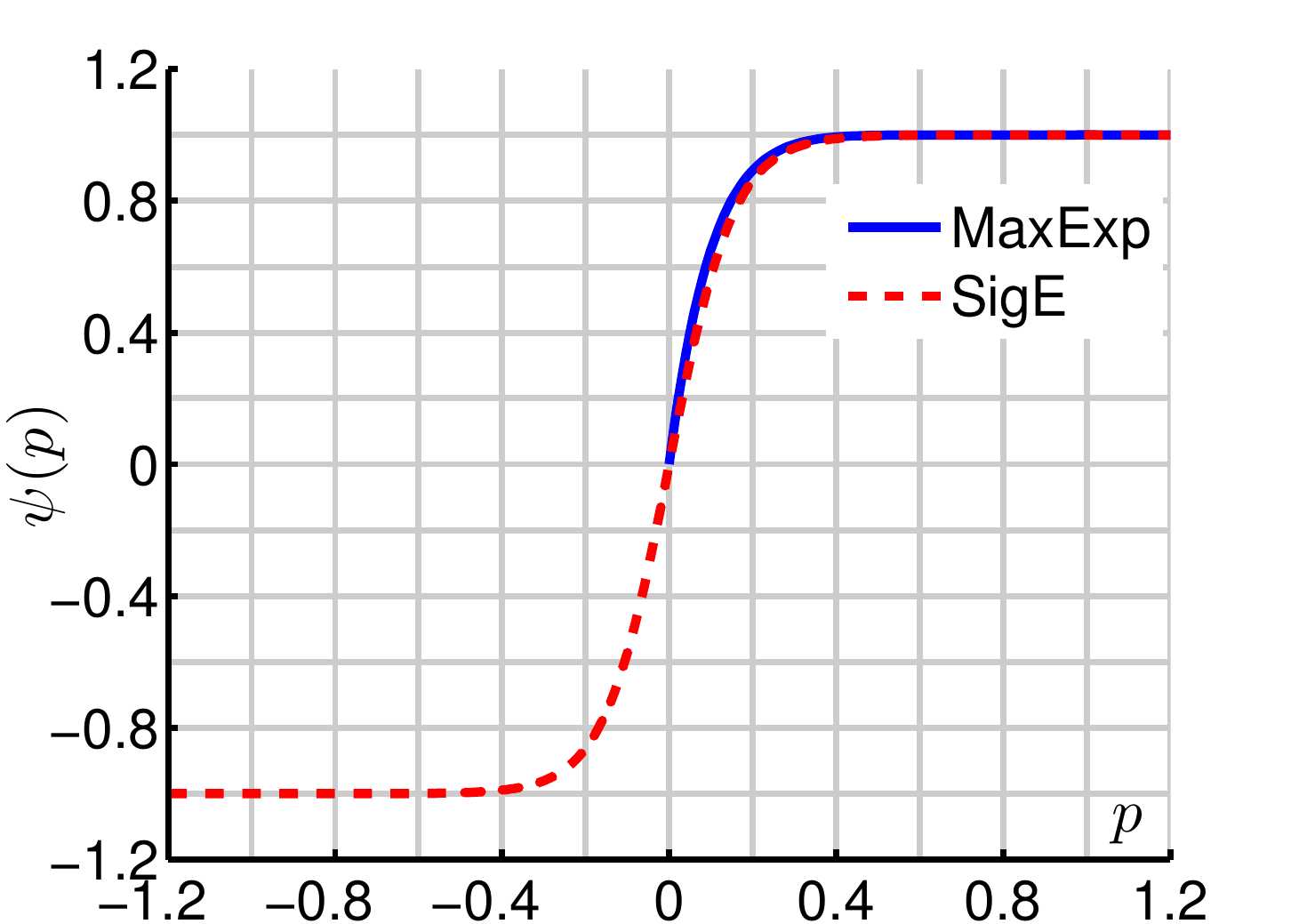}\vspace{-0.2cm}
\caption{\label{fig:pow2}}
\end{subfigure}
\begin{subfigure}[t]{0.24\linewidth}
\centering\includegraphics[trim=0 0 0 0, clip=true, height=\PowH, width=\PowW]{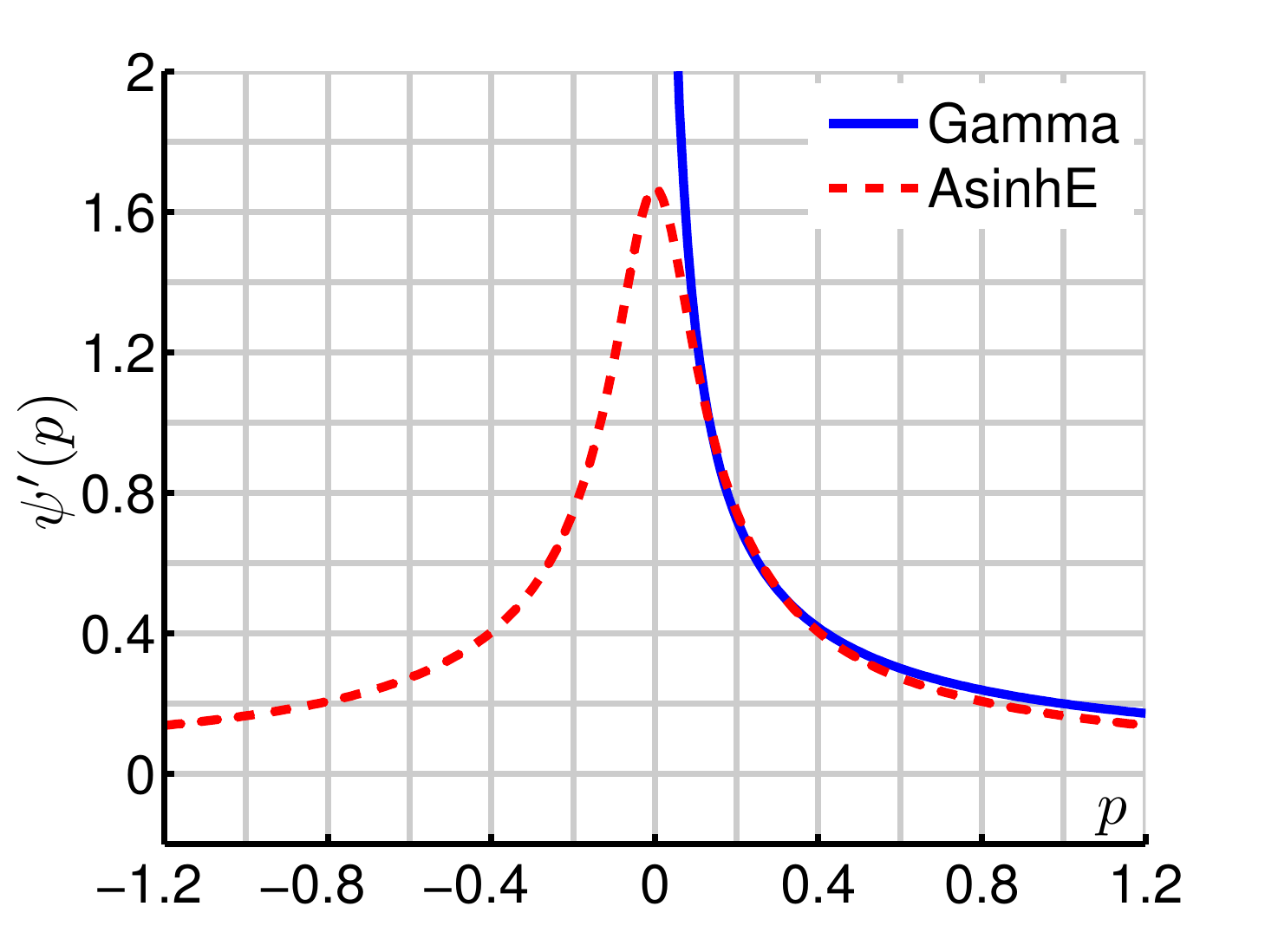}\vspace{-0.2cm}
\caption{\label{fig:pow3}}
\end{subfigure}
\begin{subfigure}[t]{0.24\linewidth}
\centering\includegraphics[trim=0 0 0 0, clip=true, height=\PowH, width=\PowW]{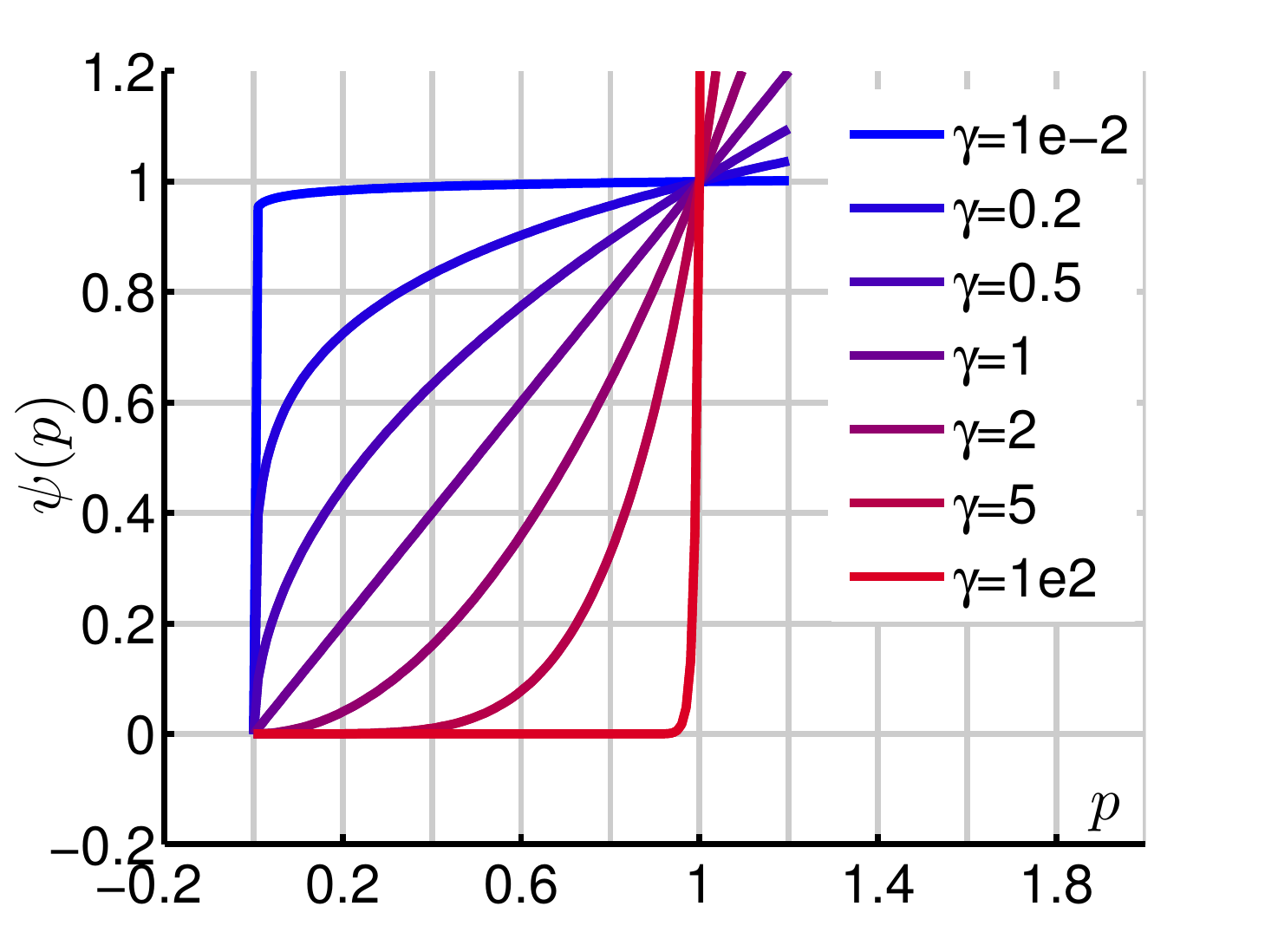}\vspace{-0.2cm}
\caption{\label{fig:pow4}}
\end{subfigure}
%
\caption{Gamma, AsinhE, MaxExp and SigmE are illustrated in Figures \ref{fig:pow1} and \ref{fig:pow2} while derivatives of Gamma and AsinhE are shown in Figure \ref{fig:pow3}. Lastly, Gamma for several values of $\gamma$ is shown in Figure \ref{fig:pow4} from which its similarity to MaxExp in range 0--1 is clear.}
\vspace{-0.3cm}
\label{fig:power-norms}
\end{figure*}

\subsection{Co-occurrence matrix}
\label{sec:cooc}

As in Prop. \ref{pr:linearize}, assume that datapoints $\mPhi_A\equiv\{\vphi_n\}_{n\in\tNnb_{\!A}}$ and $\mPhi_B\!\equiv\{\vphi^*_n\}_{n\in\tNnb_{\!B}}$ from two images $\Pi_A$ and $\Pi_B$ are given,  $N\!=\!|\tNnb_{\!A}|$ and $N^*\!\!=\!|\tNnb_{\!B}|$ are the numbers of data vectors obtained from the last convolutional feature map of CNN for images $\Pi_A$ and $\Pi_B$.
Moreover, assume that all $\vphi$ and $\vphi^*$ are rectified \eg, $\vphi_n\!:=\!\max(0,\vphi_n)$, $\vphi^*_n\!:=\!\max(0,\vphi^*_n)$, and subsequently $\beta$-centered w.r.t. the means $\vmu\!=\!\avg_{n\in\tNnb_A}\vphi_n$ and $\vmu^*\!\!=\!\avg_{n\in\tNnb_B}\vphi^*_n$ so that $\vphi_n\!:=\!\vphi_n\!-\!\beta\vmu$ and $\vphi^*_n\!:=\!\vphi^*_n\!-\!\beta\vmu^*\!$ for $0\!\leq\!\beta\!\leq\!1$.

The role of $\beta$-centering is to address anti-occurrences. Specifically, sophisticated models of Bag-of-Words utilize so-called negative visual words which are the evidence of lack of a given visual stimulus in an image. 
For instance, the authors of \cite{negoccur} define it as `{\em the negative evidence, i.e., a visual word that is mutually missing in two descriptions being compared}'. Lack of certain visual stimuli may correlate with certain visual classes \eg, lack of the sky may imply an indoor scene. Thus, the role of $\beta$ is to offset vectors $\vphi$ by their per-image averages $\vmu$ so that the positive/negative values yield correlations/anti-correlations, respectively. 

Next, let $x_n\!:=\!x_n/(W\!-\!1)$ and $y_n\!:=\!y_n/(H\!-\!1)$ be spatial coordinates normalized w.r.t. the width $W$ and height $H$ of conv. feature maps. 
We form the following kernel and its linearization by the use of Proposition \ref{pr:gaus_lin}:
\begin{align}
\label{eq:encode_sc}
&\!\!\!\!\!\left<\alpha\vvarphi(x_n,\vzeta), \alpha\vvarphi(x_{n'}^*\!,\vzeta)\right>\!+\!\left<\alpha\vvarphi(y_n,\vzeta), \alpha\vvarphi(y_{n'}^*\!,\vzeta)\right>\approx\nonumber\\
& \alpha^2G_{\sigma}(x_n\!-\!x_{n'}^*)\!+\!\alpha^2G_{\sigma}(y_n\!-\!y_{n'}^*).\!\!\! 
\end{align}
For $Z$ pivots $\vzeta$, we use $Z$ in range 3--10 and equally spaced intervals \eg, $\vzeta\!=\![-0.2:1.4/(Z\!-\!1):1.2]$ to encode the spatial coordinates $x_n$ and $y_n$. The above formulation extends to the aggregation over patches 
extracted 
from images as shown in Figure \ref{fig:principle}. 
We form vectors $\vphibar_n\!=\![\vphi_n; \vc_n]$ which are augmented by encoded spatial coordinates $\vc_n\!=\![\alpha\vvarphi(x_n,\vzeta); \alpha\vvarphi(y_n,\vzeta)]$. 
Thus, we define the total length of $\vc_n$ as $Z'\!\!=\!2Z$. 
Combining the augmented vectors with the Proposition \ref{pr:linearize} and Eq. \eqref{eq:hok3} yields: 
\begin{align}
%
& \mPsi\left(\{\vphibar_n\}_{n\in\tNnb}\right)=\tG(\mM)\,,\;\,\mM\!=\!\frac{1}{N}\sum\limits_{n\in\mathcal{N}}\vphibar_n\vphibar_n^T.\label{eq:pn_simple1}
\end{align}


\vspace{0.05cm}
\noindent{\textbf{Gamma pooling}} follows Remark \ref{re:pn} and is simply defined by setting $\tG(\mX)\!=\!(\lambda\!+\!\mX)^\gamma$, where rising $\mM$ to the power of $\gamma$ is element-wise and $\lambda$ is a small regularization constant:
\begin{align}
& \mPsi\left(\{\vphibar_n\}_{n\in\tNnb}\right)=\Big(\lambda+\frac{1}{N}\sum_{n\in\mathcal{N}}\!\vphibar_n\vphibar_n^T\Big)^{\gamma}.\label{eq:my_gamma1}
\end{align}

\subsection{Well-motivated Pooling Approaches}
Prop. \ref{prop:maxexp} states that quantity $1\!-\!(1\!-\!p)^N$ is the probability of at least one success being detected in the pool of the $N$ i.i.d. trials performed according to the Bernoulli distribution with the success probability $p$ and stored in $\vphi\!\in\!\{0,1\}^{N}$. Below we extend this simple theory to the case of co-occurrences.

\begin{proposition}
\label{pr:cooc}
Assume two event vectors $\vphi,\vphi'\!\!\in\!\{0,1\}^{N}$ which store the $N$ trials each, performed according to the Bernoulli distribution under i.i.d. assumption, 
 for which the probability $p$ of an event $(\phi_{n}\!\cap\!\phi'_{n}\!=\!1)$ denotes a co-occurrence and $1\!-\!p$, for $(\phi_{n}\!\cap\!\phi'_{n}\!=\!0)$, denotes the lack of it, and $p$ is estimated as an expected value $p\!=\!\avg_n\phi_n\phi'_{n}$. Then the probability of at least one co-occurrence event $(\phi_{n}\!\cap\!\phi'_{n}\!=\!1)$ in $\phi_n$ and $\phi'_n$ simultaneously in $N$ trials becomes:
\begin{equation}
\label{eq:my_maxexp3}
\psi\!=\!1\!-\!(1\!-\!p)^{N}.
\end{equation}
\end{proposition}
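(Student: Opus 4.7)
The plan is to reduce Proposition \ref{pr:cooc} directly to Proposition \ref{prop:maxexp} by observing that a co-occurrence is itself a single Bernoulli event. Concretely, I would define the composite indicator $\tilde{\phi}_n = \phi_n \phi'_n = \phi_n \cap \phi'_n \in \{0,1\}$ for each $n \in \idx{N}$. Since both $\vphi$ and $\vphi'$ are drawn i.i.d. from Bernoulli distributions, the product $\tilde{\phi}_n$ inherits the i.i.d. property across $n$, and by construction $\Pr(\tilde{\phi}_n = 1) = \Pr(\phi_n \cap \phi'_n = 1) = p$, where $p$ is estimated by $\avg_n \phi_n \phi'_n = \avg_n \tilde{\phi}_n$, matching the expected-value estimator used in Prop. \ref{prop:maxexp}.

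Next, I would invoke Prop. \ref{prop:maxexp} verbatim on the new Bernoulli vector $\tilde{\vphi} \in \{0,1\}^N$. The event ``at least one co-occurrence among the $N$ trials'' is exactly the event ``at least one positive outcome in $\tilde{\vphi}$'', whose probability was already shown to equal
\begin{equation}
1 - (1-p)^N = \sum_{n=1}^{N} \binom{N}{n} p^n (1-p)^{N-n},\nonumber
\end{equation}
yielding the claimed $\psi = 1 - (1-p)^N$. The complementary-event argument (probability that all $N$ joint trials fail is $(1-p)^N$ by independence, then take the complement) is identical to the fair-coin-toss argument already used in the proof of Prop. \ref{prop:maxexp}, so no new combinatorial work is required.

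There is no real obstacle beyond justifying that $\tilde{\phi}_n$ is a legitimate Bernoulli variable; this follows since the product of two $\{0,1\}$-valued variables is again $\{0,1\}$-valued, and the i.i.d.\ assumption on the pairs $(\phi_n, \phi'_n)$ across $n$ transfers to i.i.d.\ on $\tilde{\phi}_n$. Note that I do \emph{not} need $\phi_n$ and $\phi'_n$ to be independent of each other at a fixed $n$; the co-occurrence probability $p$ already absorbs whatever joint distribution they have, which is precisely what makes the estimator $p = \avg_n \phi_n \phi'_n$ the right one. This mirrors the practical situation in Section \ref{sec:cooc}, where the two ``event streams'' are the same feature-map dimensions evaluated at a common spatial location, so their joint behaviour is naturally encoded through the second-order matrix $\mM$.
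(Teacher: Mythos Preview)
Your reduction is correct and coincides with the paper's primary argument: treat $\phi_n\cap\phi'_n$ as a single Bernoulli trial with success probability $p$ and apply the complementary-event reasoning of Prop.~\ref{prop:maxexp}. The paper additionally supplies a four-outcome multinomial verification (Eq.~\eqref{eq:my_maxexppr2}) that reaches the same $1-(1-p)^N$ without first collapsing the pair into one indicator, but this is supplementary and your argument already suffices.
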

\begin{proof}
The probability of all $N$ outcomes to be $\{(\phi_{1}\!\cap\!\phi'_{1}\!=\!0),\cdots,(\phi_{N}\!\cap\!\phi'_{N}\!=\!0)\}$ amounts to $(1\!-\!p)^N$. 
The probability of at least one positive outcome 
$(\phi_{1}\!\cap\!\phi'_{1}\!=\!1)$ amounts to applying the logical `or' $\{(\phi_{1}\!\cap\!\phi'_{1}\!=\!1)\,|\cdots|\,(\phi_{N}\!\cap\!\phi'_{N}\!=\!1)\}$ and leads to $1\!-\!(1\!-\!p)^{N}\!$, where $p\!=\!\avg_n\phi_n\phi'_{n}$.

A stricter proof uses a Multinomial distribution model with four events for $(\phi_{n})$ and $(\phi'_{n})$ which describe all possible outcomes. Let probabilities $p, q, s$ and $1\!-\!p\!-\!q\!-\!s$ add up to 1 and correspond to events $(\phi_{n}\!\cap\!\phi'_{n}\!=\!1)$, $(\phi_{n}\!=\!1,\phi'_{n}\!=\!0)$, $(\phi_{n}\!=\!0,\phi'_{n}\!=\!1)$ and $(\phi_{n}\!\cup\!\phi'_{n}\!=\!0)$. The first event is a co-occurrence, the latter two are occurrences only and the last event is the lack of the first three events. The probability of at least one co-occurrence $(\phi_{n}\!\cap\!\phi'_{n}\!=\!1)$ in $N$ trials becomes:
\begin{align}
& \!\!\!\!\!\!\!\textstyle\sum\limits_{n=1}^{N}\sum\limits_{n'=0}^{N\!-n}\sum\limits_{n''\!=0}^{N\!-n-n'}\!\!\!\!\!\binom{N}{n,n'\!,n''\!,N\!-n-n'\!-n''\!} p^nq^{n'\!}s^{n''\!\!}(1\!\!-\!\!p\!\!-\!\!q\!\!-\!\!s)^{N\!-n-n'-n''}\!\!\!\!.\nonumber\\[-10pt]
& \label{eq:my_maxexppr2}
\end{align}
One can verify algebraically/numerically that Eq. \eqref{eq:my_maxexppr2} and \eqref{eq:my_maxexp3} are equivalent w.r.t. $p$ which completes the proof.
\end{proof}
\begin{remark}
\label{re:maxexp2}
A practical implementation of this pooling strategy is given by $\psi_{kl}\!=\!1\!-\!(1\!-\!\avg_n\phi_{kn}\phi_{ln})^{\eta}$, where $0\!<\!\eta\!\approx\!N$ is an adjustable parameter, and $\phi_{kn}$ and $\phi_{ln}$ are $k$-th and $l$-th features of an $n$-th feature vector \eg, as defined in Prop. \ref{pr:linearize}, which is normalized as detailed next.
\end{remark}
\begin{remark}
\label{re:maxexp3}
In practice, $p$ is an expected value over $N$ rectified co-occurring responses of pairs of convolutional filters rather than binary variables. A similar strategy is used with success in the BoW model \cite{me_tensor}. 
In matrix form, we have:
\begin{align}
& \mPsi\!=\!\mygthree{\,\mM,\eta\,}\!=\!1\!-\!\left(1\!-\!\frac{\mM}{\trace(\mM)\!+\!\lambda}\right)^\eta, 
\label{eq:my_maxexp4}
\end{align}
where $\trace(\mM)$ prevents elements of co-occurrence matrix $\mM$ in enumerator of Eq. \eqref{eq:my_maxexp4} from exceeding value of one, constant $\lambda$$\approx$1e-6 deals with the vanishing trace and $\eta$ is chosen via cross-validation. 
\end{remark}
\begin{remark}
\label{re:maxexp_resid}
$\tG^{*}(\mM,\eta)\!=\!\tG(\mM,\eta)(\trace(\mM)\!+\!\lambda)^\gamma$ compensates for the trace in \eqref{eq:my_maxexp4} which affected the input-output ratio of norms. $\tG^{\ddagger}(\mM,\eta)\!=\!\tG(\mM,\eta)\!+\!\kappa\mM$ prevents vanishing gradients in pooling. Both terms can be combined.
\end{remark}

We note that matrix $\mM$ contains co-occurrences created from feature vectors $\vphi$ which were $\beta$-centered. Therefore, some entries of $\mM$ may be negative. This breaks down pooling models such as Gamma and MaxExp for which we strictly use $\beta\!=\!0$ that disables the anti-correlation mechanism. Nevertheless, we list detailed derivatives of these pooling functions w.r.t. the feature vectors in  Appendix \ref{app:der}.
%

\begin{table}[b]
\vspace{-0.3cm}
\begin{center}
{
\setlength{\tabcolsep}{0.3em}
\centering
\begin{tabular}{c | c c c c}
Pooling  & $\psi(p)$ & $\psi'(p)$ & \multirow{2}{*}{$\psi(p)$} & \multirow{2}{*}{$\psi'(p)$} \\
function & if $p\!<\!0$ & if $p\!=\!0$ & &\\
\hline
{\em Gamma}$\;$ \cite{me_tensor} & \multicolumn{1}{c}{inv.} & $\infty$ & $p^{\gamma}$ & $\gamma p^{\gamma\!-\!1}$ \\
%
{\em MaxExp} \cite{me_tensor} &  inv. & fin. & \kern-0.4em$1\!-\!(1\!-\!p)^{\eta}$ & \kern-0.3em$\eta(1\!-\!p)^{\eta\!-\!1}$ \\
\hline
{\em AsinhE} & ok & fin. & \kern-0.4em$\asinh(\gamma'p)$ & $\frac{\gamma'}{\sqrt{1+\gamma'^2p^2}}$\\
{\em SigmE} & ok & fin. & \kern-0.4em $\frac{2}{1\!+
\!e^{-\eta'p}}\!-\!1$ & $\frac{2\eta'\!e^{-\eta'p}}{(1+
e^{-\eta'p})^2}$
\end{tabular}
}
\end{center}
\vspace{-0.3cm}
\caption{A collection of Power Normalization functions.
Variables $\gamma\!>\!0$, $\gamma'\!\!>\!0$, $\eta\!\geq\!1$, and $\eta'\!\!\geq\!1$ control the level of power normalization. We indicate properties of $\psi$ such as finite ({\em fin.}) or infinite ($\infty$) derivative of $\psi$ w.r.t. $p$ at $p\!=\!0$ and invalid ({\em inv.}) or valid ({\em ok}) power normalization for $p\!<\!0$.}
\label{tab:smd}
\vspace{-0.1cm}
\end{table}

\subsection{Well-behaved Power Normalizations}
Power Normalizations in Eq. \eqref{eq:my_gamma1} and \eqref{eq:my_maxexp4} have infinite or undetermined gradients if coefficients $M_{mn}\!\rightarrow\!0$ and $\lambda\!\rightarrow\!0$. If regularization $\lambda\!>\!0$, both power normalizations are somewhat compromised as their role is to magnify weak signals $\phi\!\approx\!0$. Moreover, these pooling schemes break down in presence of negative entries $M_{mn}\!<\!0$. Therefore, we propose the following poolings extensions.

\vspace{0.05cm}
\noindent{\textbf{SigmE pooling}}, used in lieu of MaxExp in Eq. \eqref{eq:my_maxexp3} and \eqref{eq:my_maxexp4}, is given by Logistic a.k.a. Sigmoid ({\em SigmE}) functions:
\begin{align}
& \!\!\!\!\!\mPsi\!=\!\mygthree{\,\mM,\eta\,}\!=\!\frac{2}{1\!+\!\expl{-\eta'\mM}}\!-\!1\text{ and }\frac{2}{1\!+\!\expl{\frac{-\eta'\mM}{\trace(\mM)+\lambda}}}\!-\!1.\!
\end{align}

\vspace{0.05cm}
\noindent{\textbf{AsinhE pooling}} is an alternative to Gamma function in Eq. \ref{eq:my_gamma1}. It is defined as the Arcsin hyperbolic function:
\begin{align}
& \!\!\mPsi\!=\!\mygthree{\,\mM,\eta\,}\!=\arcsinh(\gamma'\!\mM)\!=\!\log(\gamma'\!\mM+\sqrt{1+{\gamma'}^2\!\mM^2}),
\end{align}

Figure \ref{fig:power-norms} illustrates MaxExp and SigmE as well as Gamma and AsinhE functions from which it is clear that,  for negative $p$, SigmE and AsinhE are natural extensions of MaxExp and Gamma, respectively. The derivative of AsinhE is smooth and finite (the same holds for SigmE) unlike the derivative of Gamma. Due to the above findings, we will perform our experiments on SigmE and AsinhE only. Table \ref{tab:smd}  lists various properties of the Power Normalization functions. Moreover, Appendix \ref{app:der2} provides detailed derivatives of these pooling functions w.r.t. the feature vectors. We used these derivatives in our end-to-end learning of CNNs.

Power Normalization functions have a whitening effect on features \ie, the frequent bursts of the same kind of feature are reduced while the responses of rarely occurring features are magnified \cite{me_tensor}. 
For co-occurrences of visual features, we showed in Prop. \ref{pr:cooc} that Power Normalizations act as detectors of co-occurring combinations of patterns \ie, they capture if at least one co-occurrence of features takes place but they discard the quantity of such co-occurrences which otherwise would be a source of nuisance/noise. 

\subsection{Spectral Power Normalizations}
Spectral versions of our pooling methods and their derivatives can be obtained by performing an SVD on $\mM$, substituting eigenvalues $\lambda_{ii}$ according to Table \ref{tab:smd} such that $\lambda^{\star}_{ii}\!:=\!\psi(\lambda_{ii})$ and computing $\mygthree{\mM}\!=\!\mU\mLambda^{\star}\mU^T$. For derivatives, $\lambda^{\diamond}_{ii}\!:=\!\psi'(\lambda_{ii})$ can be applied in back-propagation via SVD \cite{sminchisescu_matrix}. Table \ref{tab:smd2} shows that the spectral MaxExp and its derivative may be computed via matrix multiplications.

\begin{table}[b]
\vspace{-0.3cm}
\setlength{\tabcolsep}{0.3em}
\ifdefined\arxiv
\centering
\else
\hspace{-0.5cm}
\fi
\begin{tabular}{c | c c c c}
& {\em Gamma} & {\em MaxExp} & {\em AsinhE} & {\em SigmE}\\
\hline
\scriptsize$\!\!\!\mygthree{\mM}$\kern-0.2em & \kern-0.1em\scriptsize$\mM^\gamma$ & \kern-0.9em\scriptsize$\mIdent\!-\!(\mIdent\!-\!\!\frac{\mM}{{\trace(\mM)+\lambda}})^\eta$\kern-0.3em & \scriptsize$\log\!\Big(\!\gamma'\!\mM\!\!+\!(\mIdent\!\!+\!\!\gamma'^2\!\mM^2)^{\frac{1}{2}}\!\Big)$ & \kern-0.2em\scriptsize$2\Big(\mIdent\!\!+\!\!e^{\!\frac{-\eta'\mM}{{\trace(\mM)+\lambda}}}\Big)^{\!-\!1}\!\!\!\!\!-\!\mIdent$\kern-0.6em \\
der.\kern-0.2em & \kern-0.2em\scriptsize Eq. \eqref{eq:sylv} /SVD\kern-0.6em & \kern-0.1em\scriptsize Eq. \eqref{eq:binom_mat_der} /SVD & \scriptsize SVD & \scriptsize SVD
\end{tabular}
\caption{A collection of spectral Power Normalization functions. The square, square root, power, log and exp are matrix operations.}
\label{tab:smd2}
\vspace{-0.3cm}
\end{table}

\section{Experiments}
\label{sec:expts}

Below we demonstrate experimentally merits of our second-order pooling with Power Normalizations.

\vspace{0.05cm}
\noindent{\textbf{Datasets.}} We employ four publicly available datasets and report the mean top-$1$ accuracy on each of them. The Flower102 dataset \cite{nilsback_flower102} is a fine-grained category recognition dataset that contains 102 categories of various flowers. 
Each class consists of between 40 and 258 images. 
The MIT67 dataset \cite{quattoni_mitindoors} contains a total of 15620 images belonging to 67 indoor scene classes. 
We follow the standard evaluation protocol, which uses a train and test split of 80\% and 20\% of images per class.
The FMD dataset contains in total 100 images per category belonging to 10 categories of materials (\eg, glass, plastic, leather) collected from the Flickr website. 
Lastly, the Food-101 dataset \cite{food101} has 101000 images in total and 1000 images per category. 

\ifdefined\arxiv
\newcommand{\SrcImgWW}{0.13}
\newcommand{\SrcImgHHH}{1.8cm}
\newcommand{\SrcImgWWW}{1.8cm}
\else
\newcommand{\SrcImgWW}{0.215}
\newcommand{\SrcImgHHH}{1.8cm}
\newcommand{\SrcImgWWW}{1.8cm}
\fi

\begin{figure}[b]
\centering
\vspace{-0.3cm}
\begin{subfigure}[b]{\SrcImgWW\linewidth}
\centering\includegraphics[trim=0 0 0 0, clip=true,width=\SrcImgWWW, height=\SrcImgHHH]{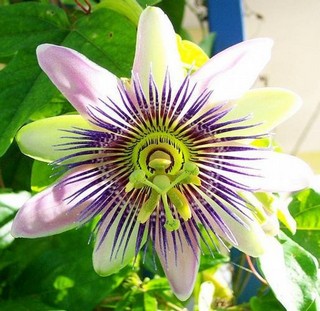}
\end{subfigure}
\begin{subfigure}[b]{\SrcImgWW\linewidth}
\centering\includegraphics[trim=0 0 0 0, clip=true,width=\SrcImgWWW, height=\SrcImgHHH]{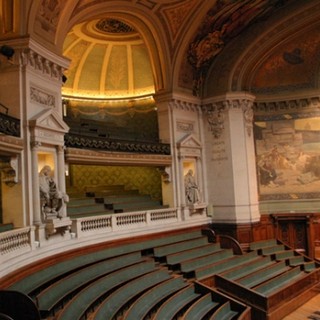}
\end{subfigure}
\begin{subfigure}[b]{\SrcImgWW\linewidth}
\centering\includegraphics[trim=0 0 0 0, clip=true,width=\SrcImgWWW, height=\SrcImgHHH]{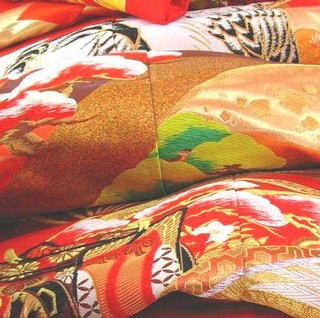}
\end{subfigure}
\begin{subfigure}[b]{\SrcImgWW\linewidth}
\centering\includegraphics[trim=0 0 0 0, clip=true,width=\SrcImgWWW, height=\SrcImgHHH]{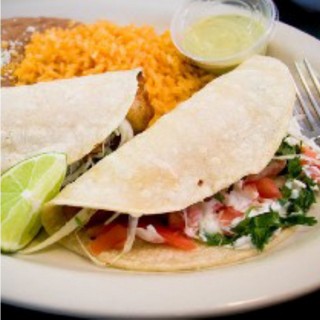}
\end{subfigure}
\\
\vspace{0.052cm}
\begin{subfigure}[b]{\SrcImgWW\linewidth}
\centering\includegraphics[trim=0 0 0 0, clip=true,width=\SrcImgWWW, height=\SrcImgHHH]{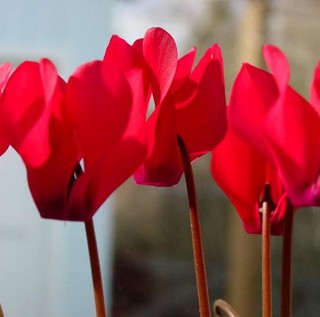}
\end{subfigure}
\begin{subfigure}[b]{\SrcImgWW\linewidth}
\centering\includegraphics[trim=0 0 0 0, clip=true,width=\SrcImgWWW, height=\SrcImgHHH]{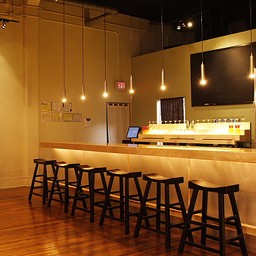}
\end{subfigure}
\begin{subfigure}[b]{\SrcImgWW\linewidth}
\centering\includegraphics[trim=0 0 0 0, clip=true,width=\SrcImgWWW, height=\SrcImgHHH]{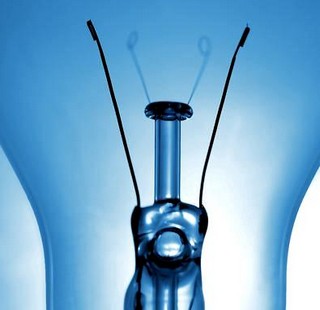}
\end{subfigure}
\begin{subfigure}[b]{\SrcImgWW\linewidth}
\centering\includegraphics[trim=0 0 0 0, clip=true,width=\SrcImgWWW, height=\SrcImgHHH]{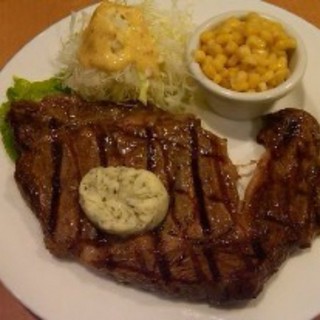}
\end{subfigure}
\\
\vspace{0.052cm}
\begin{subfigure}[b]{\SrcImgWW\linewidth}
\centering\includegraphics[trim=0 0 0 0, clip=true,width=\SrcImgWWW, height=\SrcImgHHH]{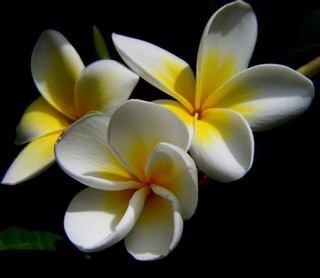}
\end{subfigure}
\begin{subfigure}[b]{\SrcImgWW\linewidth}
\centering\includegraphics[trim=0 0 0 0, clip=true,width=\SrcImgWWW, height=\SrcImgHHH]{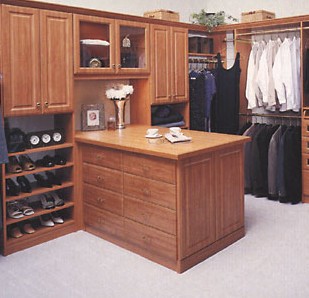}
\end{subfigure}
\begin{subfigure}[b]{\SrcImgWW\linewidth}
\centering\includegraphics[trim=0 0 0 0, clip=true,width=\SrcImgWWW, height=\SrcImgHHH]{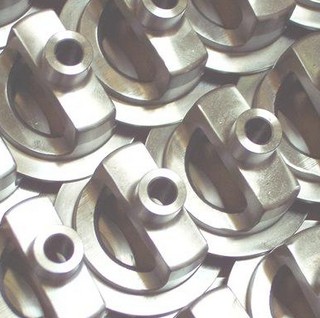}
\end{subfigure}
\begin{subfigure}[b]{\SrcImgWW\linewidth}
\centering\includegraphics[trim=0 0 0 0, clip=true,width=\SrcImgWWW, height=\SrcImgHHH]{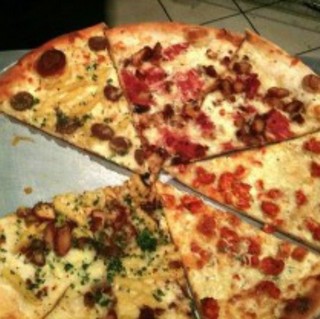}
\end{subfigure}
%
%
\caption{Each column shows examples of images from the Flower102, MIT67 FMD and Food101 dataset, respectively.
}\vspace{-0.3cm}
\label{fig:datasets}
\end{figure}

\vspace{0.05cm}
\noindent{\textbf{Experimental setup.}} 
For Flower102 \cite{nilsback_flower102}, we extract 12 cropped 224$\times$224 patches per image and use mini-batch of size 5 to fine-tune the ResNet-50 model \cite{resnet} pre-trained on ImageNet \cite{ILSVRC15}. We obtain 2048 dim. $12\!\times\!7\!\times\!7$ conv. feature vectors from the last conv. layer for our second-order pooling layer.
For MIT67 \cite{quattoni_mitindoors}, we resize original images to 336$\times$336 and use mini-batch of size 32, then fine-tune it on the ResNet-50 model \cite{resnet} pre-trained on the Places-205 dataset \cite{places_dataset}. With $336\!\times\!336$ image size, we obtain 2048 dim. $11\!\times\!11$ conv. feature vectors from the last conv. layer for our second-order pooling layer. 
For FMD \cite{fmd} and Food101 \cite{food101}, we resize images to $448\!\times\!448$, use mini-batch of size 32 and fine-tune ResNet-50 \cite{resnet} pre-trained on ImageNet \cite{ILSVRC15}. We use the 2048 dim. $14\!\times\!14$ conv. feature vectors from the last conv. layer. 
For ResNet-50, we fine-tune all layers for $\sim$20 epochs with learning rates 1e-4--1e-6. We use the Root Mean Square Propagation (RMSprop) \cite{rmsprop} with the moving average $0.99$. 
%
 Where stated, we use AlexNet \cite{krizhevsky_alexnet} with fine-tuned last two conv. layers. We use $256$ dim. $6\!\times\!6$ conv. feature vectors from the last convolutional layer.

\vspace{0.05cm}
\noindent{\textbf{Our methods.}} We evaluate  the generalizations of MaxExp and Gamma 
which are Logistic a.k.a. Sigmoid ({\em SigmE}) and the Arcsin hyperbolic ({\em AsinhE}) pooling functions. We focus mainly on our second-order representation ({\em SOP}) but we also occasionally report results for the first-order approach ({\em FOP}). For the baseline, we use the classifier on top of the {\em fc} layer ({\em Baseline}). The hyperparameters of our model are selected via cross-validation. The use of spatial coordinates is indicated by ({\em SC}) and spectral operators by ({\em Spec}).

\begin{table}[t]
\vspace{-0.3cm}
\centering
\begin{tabular}{l l|c|c|}
Method & & \multicolumn{2}{c|}{top-$1$ accuracy}  \\ 
\hline
{\em Second-order Bag-of-Words}\kern-0.6em  &\cite{me_tensor} & \multicolumn{2}{c|}{90.2} \\
{\em Factors of Transferability}\kern-0.6em &\cite{carlson_cnn} & \multicolumn{2}{c|}{91.3} \\
{\em Reversal-inv. Image Repr.}\kern-0.6em  &\cite{rrir} & \multicolumn{2}{c|}{94.0} \\
{\em Optimal two-stream fusion}\kern-0.6em  &\cite{two-stream} & \multicolumn{2}{c|}{94.5} \\
{\em Neural act. constellations}\kern-0.6em &\cite{nacc} & \multicolumn{2}{c|}{95.3} \\
\end{tabular}\\
%
\vspace{0.2cm}
\begin{tabular}{l l|c|c|}
Method       && Alexnet & ResNet-50 \\
\hline
{\em Baseline}      && 82.00  & 94.06   \\ 
{\em FOP}         	&& 85.40  & 94.08   \\ 
{\em FOP+AsinhE}    && 85.64  & 94.60   \\ 
{\em SOP}         	&& 87.20  & 94.70   \\ 
{\em SOP+AsinhE}    && 88.40  & 95.12   \\ 
{\em SOP+SC+AsinhE} && 90.70  & 95.74   \\ 
{\em SOP+SC+SigmE}  && 91.71  & \textbf{96.78}\\
\hline
{\em SOP+SC+Spec. Gamma} && -  & 96.88   \\ 
{\em SOP+SC+Spec. MaxExp}  && -  & \textbf{97.28}
\end{tabular}
\caption{The Flower102 dataset. The bottom part shows our results for Alexnet and ResNet-50. The top part of the table lists state-of-the-art results from the literature.}
\label{tab:flower102}
\vspace{-0.3cm}
\end{table}

\subsection{Evaluations}
\label{sec:eval}

\begin{table}[b]
\vspace{-0.3cm}
\centering
\begin{tabular}{l l|c|}
Method && top-$1$ accuracy\\
\hline
{\em CNNs with Deep Supervision} & \cite{places_mit_more} & 76.1\\
{\em Places-205}\kern-0.6em&\cite{places_mit} & 80.9 \\
{\em Deep Filter Banks}\kern-0.6em &\cite{cimpoi2015deep} & 81.0 \\
{\em Spectral Features}\kern-0.6em&\cite{khan2017scene} & 84.3 \\
\hline
{\em Baseline}  && 84.0\\
{\em SOP+AsinhE} && 85.3 \\
{\em SOP+SigmE} && 85.6\\ 
{\em SOP+SC+AsinhE} && 85.9 \\
{\em SOP+SC+SigmE} && \textbf{86.3}
\end{tabular}
\caption{The MIT67 dataset. The bottom part shows our results for ResNet-50 pre-trained on the Places-205 dataset. The top part of the table lists state-of-the-art results from the literature. }
\label{tab:mit67}
\vspace{-0.3cm}
\end{table}

We start by combining first- and second-order representations with SigmE and AsinhE pooling. We also investigate the impact of AlexNet and ResNet-50 on our approach.

\vspace{0.05cm}
\noindent{\textbf{Flower102.}} Table \ref{tab:flower102} shows that AlexNet performs worse than ResNet-50 which is consistent with the literature. For the standard ResNet-50 fine-tuned on Flower102, we obtain 94.06\% accuracy. The first-order Average and AsinhE pooling ({\em FOP}) and ({\em FOP+AsinhE}) score 94.08 and 94.6\% accuracy. The second-order pooling ({\em SOP+AsinhE}) outperforms ({\em FOP+AsinhE}). We obtain the best result of \textbf{96.78}\% for the second-order representation combined with spatial coordinates and SigmE pooling ({\em SOP+SC+SigmE}) which is 2.72\% higher than our baseline. In contrast, a recent more complex state-of-the-art method  \cite{nacc} obtained 95.3\% accuracy. Our scores highlight that capturing co-occurrences of visual features and passing them via a well-defined Power Normalization function such as SigmE works well for our fine-grained problem. We attribute the good performance of SigmE to its ability to act as a detector of co-occurrences. The role of the Hyperbolic Tangent non-linearity popular in deep learning may be explained by its similarity to SigmE. Lastly, our spectral MaxExp ({\em SOP+SC+Spec. MaxExp}) yields \textbf{97.28}\% accuracy.

\noindent{\textbf{Scene recognition.}} Next, we validate our approach on MIT67--a larger dataset for scene recognition.  Table \ref{tab:mit67} shows that all second-order approaches ({\em SOP}) outperform the standard ResNet-50 network ({\em Baseline}) pre-trained on the Places-205 dataset and fine-tuned on MIT67. Moreover, ({\em SigmE})  yields marginally better results than ({\em AsinhE}). Using spatial coordinates ({\em SC}) also results in additional gain in the classification performance. The second-order representation combined with  spatial coordinates and SigmE pooling ({\em SOP+SC+SigmE}) yields \textbf{86.3}\% accuracy and outperforms our baseline and \cite{khan2017scene} by 2.3 and 2\%, respectively.

\vspace{0.05cm}
\noindent{\textbf{Material classification.}} Next, we quantify our performance on the FMD dataset for material/texture recognition. 
\begin{table}[t]
\vspace{-0.3cm}
\centering
\begin{tabular}{l l c || c c|}
Method  && acc. & Method  & acc.\\
\hline
{\em IFV+DeCAF}\kern-0.6em&\cite{cimpoi2014describing} & 65.5  & {\em Baseline} & 83.4 \\
{\em FV+FC+CNN}\kern-0.6em&\cite{cimpoi2015deep}       & 82.2  & {\em SOP+SC+AsinhE} & 85.0 \\
{\em SMO Task}\kern-0.6em&\cite{zhang2016integrating}  & 82.3  & {\em SOP+SC+SigmE} & \textbf{85.5}
\end{tabular}
\caption{The FMD dataset. Our (right) vs. other methods (left).}
\label{tab:fmd}
\vspace{-0.3cm}
\end{table}
Table \ref{tab:fmd} demonstrates that our second-order representation ({\em SOP+SC+SigmE}) scores \textbf{85.5}\% accuracy and outperforms our baseline approach by 2.1\%. We note that our approach and the baseline use the same testbed. The only difference is our second-order representations, spatial coordinates and Power Normalization components in our last layer.

\ifdefined\arxiv
\newcommand{\PlotWW}{0.245}
\newcommand{\PlotHHH}{4cm}
\newcommand{\PlotWWW}{3.8cm}
\else
\newcommand{\PlotWW}{0.495}
\newcommand{\PlotHHH}{4cm}
\newcommand{\PlotWWW}{4.45cm}
\fi

\ifdefined\arxiv
\begin{figure}[!b]
\else
\begin{figure}[b]
\fi
\vspace{-0.3cm}
\centering
\begin{subfigure}[t]{\PlotWW\linewidth}
\centering\includegraphics[trim=0 0 0 0, clip=true,width=\PlotWWW]{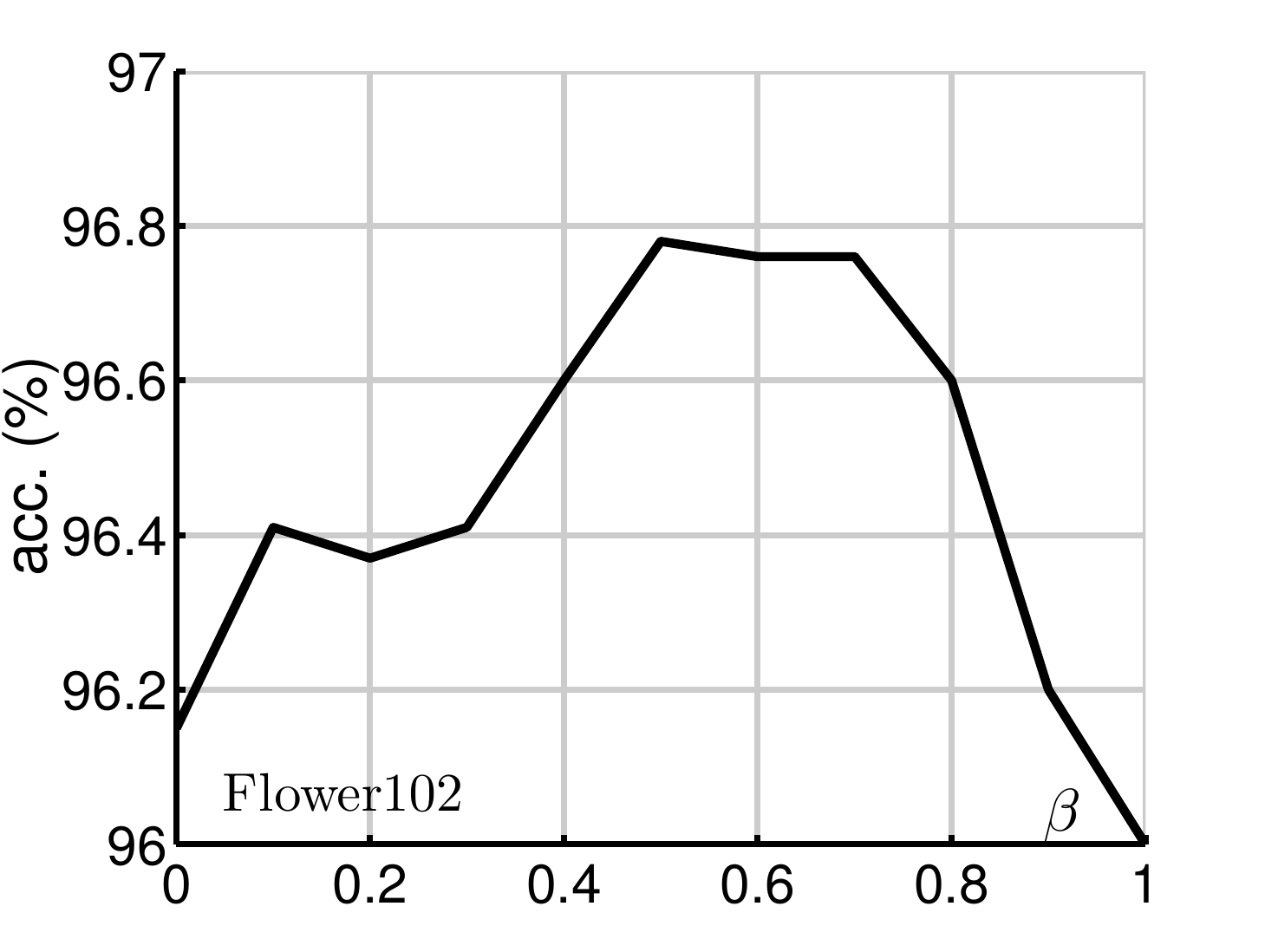}\vspace{-0.2cm}
\caption{\label{fig:eval1}}
\end{subfigure}
\begin{subfigure}[t]{\PlotWW\linewidth}
\centering\includegraphics[trim=0 0 0 0, clip=true,width=\PlotWWW]{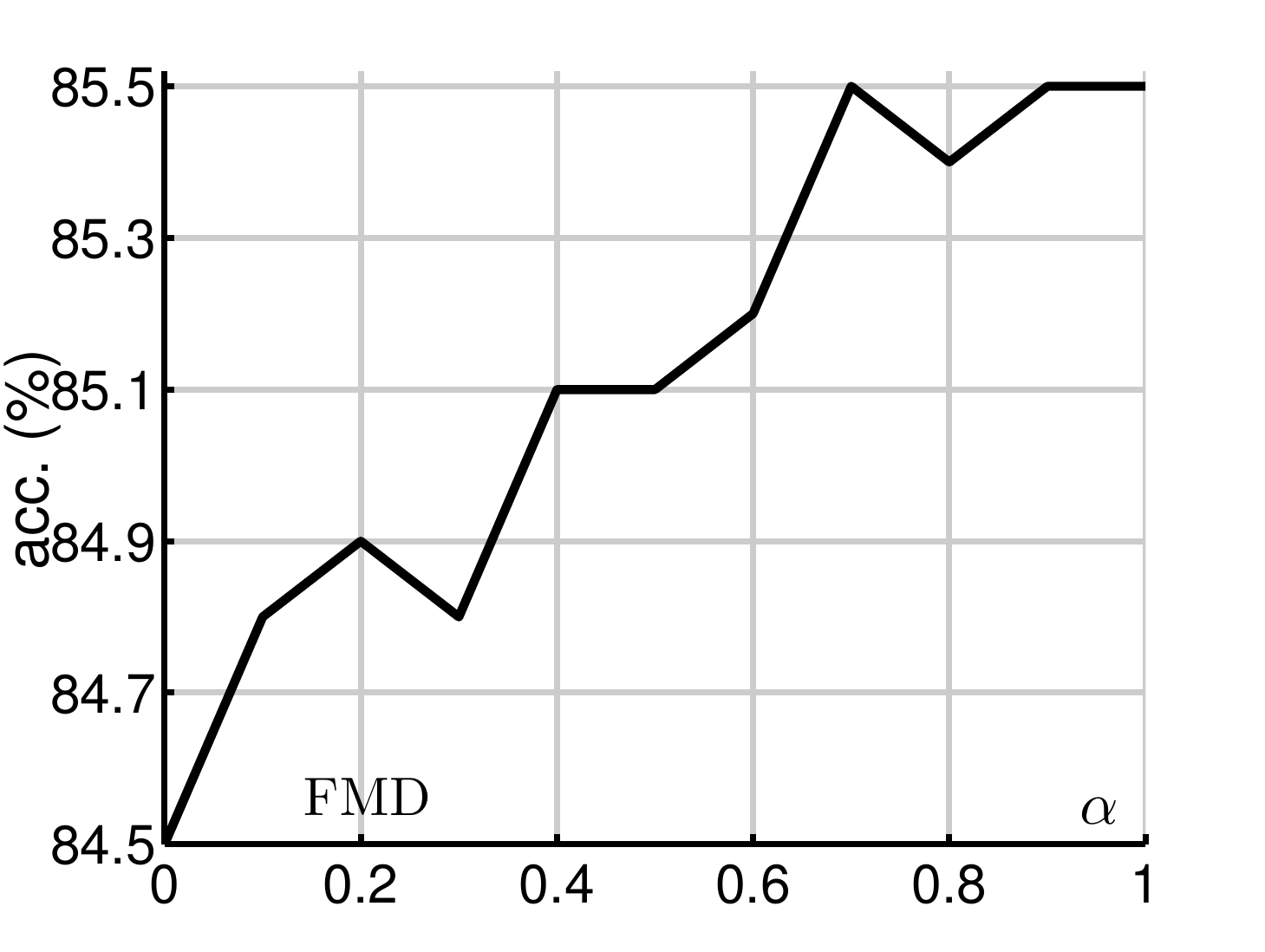}\vspace{-0.2cm}
\caption{\label{fig:eval2}}
\end{subfigure}
\ifdefined\arxiv\else\\\fi
\begin{subfigure}[t]{\PlotWW\linewidth}
\centering\includegraphics[trim=0 0 0 0, clip=true,width=\PlotWWW]{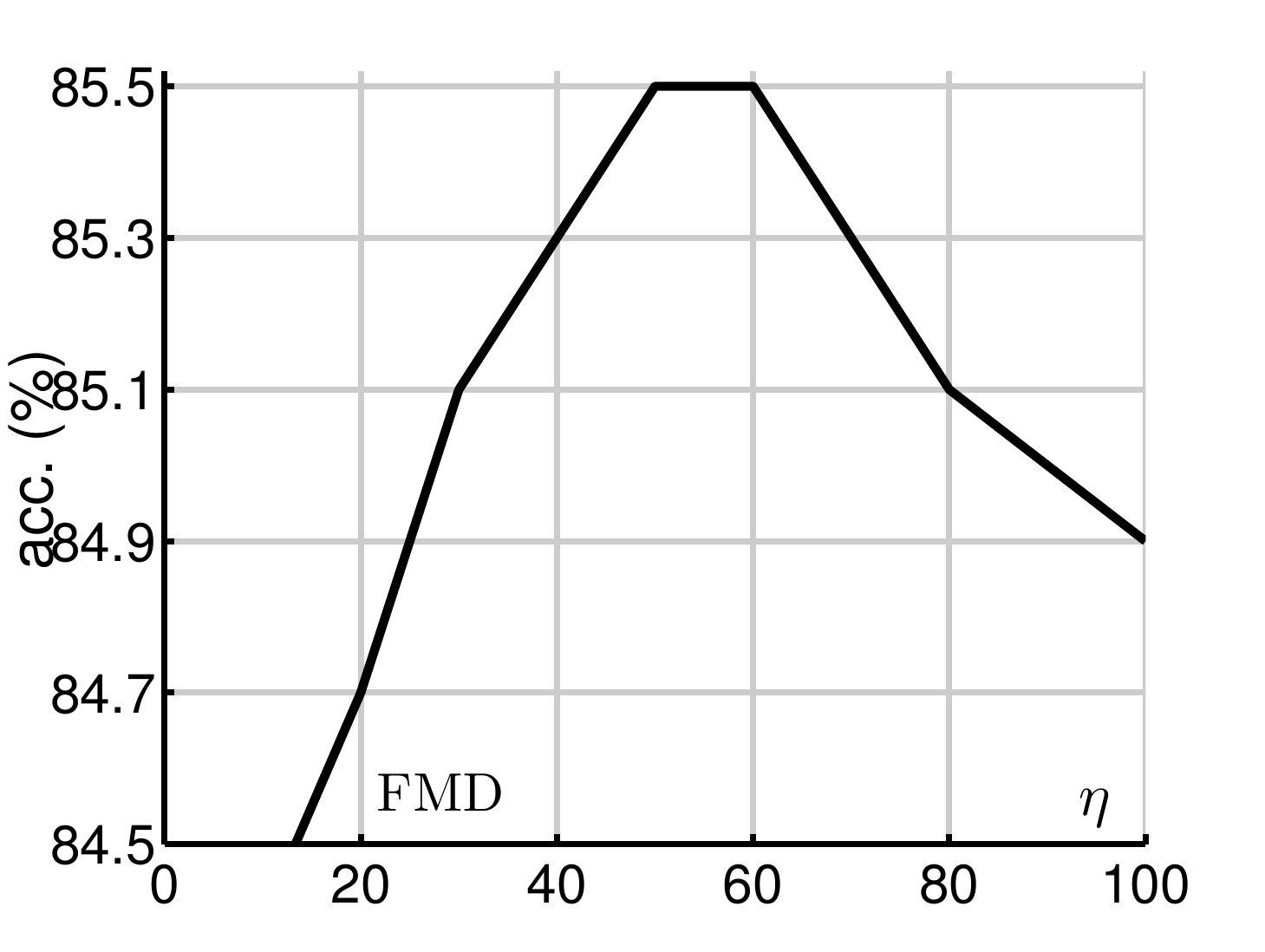}\vspace{-0.2cm}
\caption{\label{fig:eval3}}
\end{subfigure}
\begin{subfigure}[t]{\PlotWW\linewidth}
\centering\includegraphics[trim=0 0 0 0, clip=true,width=\PlotWWW]{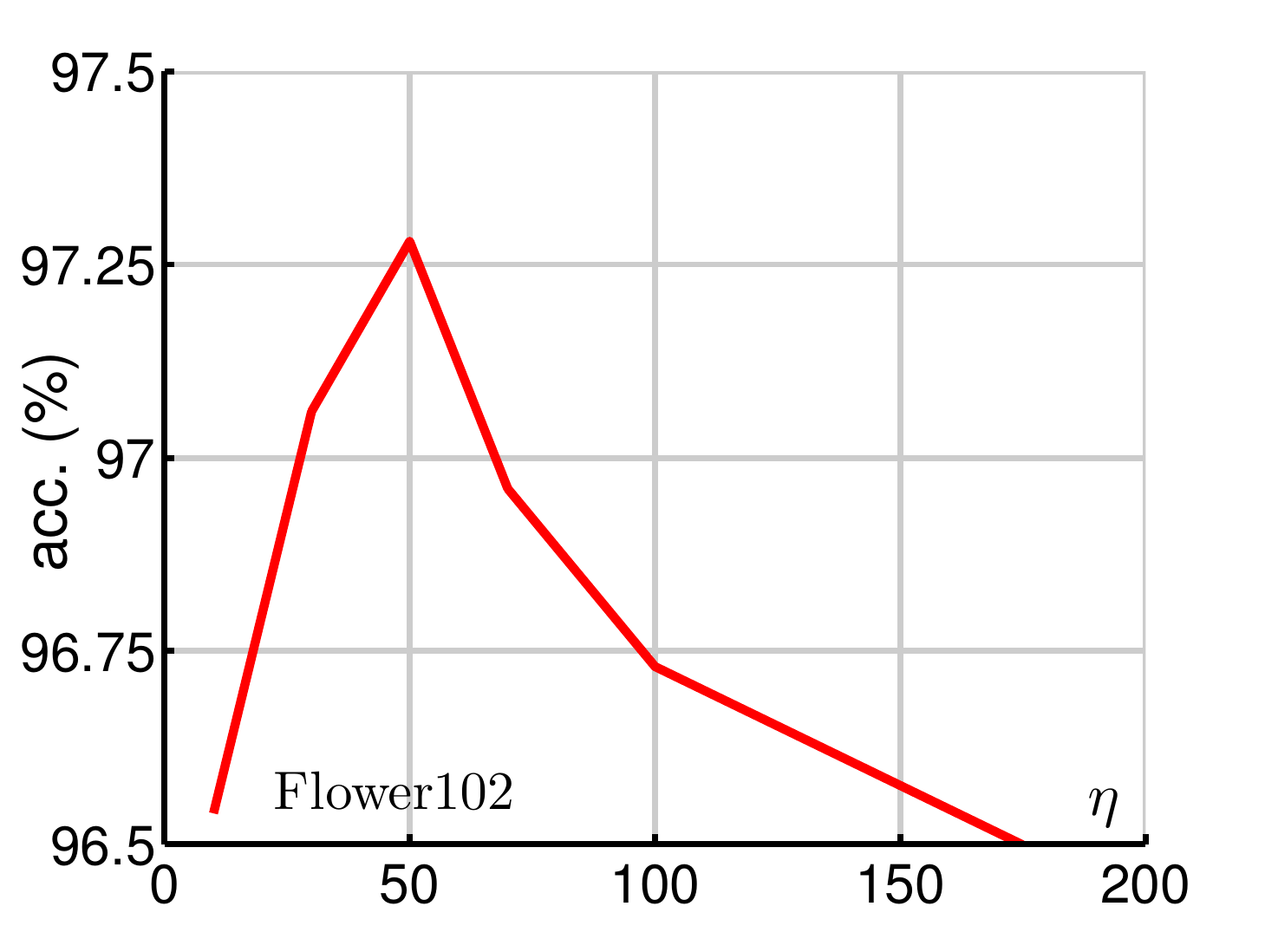}\vspace{-0.2cm}
\caption{\label{fig:eval4}}
\end{subfigure}
%
%
\caption{Performance w.r.t. hyperparameters. Figures \ref{fig:eval1} and \ref{fig:eval2}: $\beta$-centering on Flower102 and $\alpha$ for spatial coordinate encoding on FMD. Figures \ref{fig:eval3} and \ref{fig:eval4}: the accuracy w.r.t. the $\eta'\!$ and $\eta\!$ parameters given SigmE and the spectral MaxExp.
}
\label{fig:sens_pars}
\vspace{-0.3cm}
\end{figure}

\vspace{0.05cm}
\noindent{\textbf{Food101.}} We apply our strongest second-order representations ({\em SOP+SC+SigmE}) and ({\em SOP+SC+Spec. MaxExp}) to this dataset and obtain \textbf{87.5}\% and \textbf{87.8}\% accuracy. In contrast, a recent more involved kernel pooling \cite{cui2017kernel} reports 85.5\% accuracy while the baseline approach scores only 81.9\% in the same testbed. This demonstrates the strength of our approach on fine-grained problems.

\vspace{0.05cm}
\noindent{\textbf{Performance w.r.t. hyperparameters.}} Figure \ref{fig:eval1} demonstrates that $\beta$-centering has a positive impact on image classification with ResNet-50. This strategy, detailed in Section \ref{sec:cooc}, is trivial to combine with our pooling. Figure \ref{fig:eval2} shows that setting non-zero $\alpha$, which lets encode spatial coordinates according to Eq. \eqref{eq:encode_sc}, brings additional gain in accuracy at no extra cost. Figure \ref{fig:eval3} demonstrates that over 1\% accuracy can be gained by tuning our SigmE pooling. Moreover, Figure \ref{fig:eval4} shows that the spectral MaxExp can yield further gains over element-wise SigmE and MaxExp for carefully chosen $\eta$. Lastly, we have observed that our spectral and element-wise MaxExp converged in 3--12 and 15--25 iterations, resp. This shows that both spectral and element-wise pooling have their strong and weak points.

\section{Conclusions}
\label{sec:conclude}

We have studied Power Normalizations in the context of co-occurrence representations and demonstrated their theoretical role which is to `detect' co-occurring pairs of features. We have proposed surrogate functions SigmE and AsinhE which can handle so-called negative evidence and  have well-behaved derivatives for end-to-end learning. 
SigmE and AsinhE also suggest that sigmoid-like non-linearities in neural networks reject counts of visual features and act as feature detectors instead. 
 Our pooling operators are element-wise and cheap to implement in GPU. Moreover, our pooling operators easily extend to spectral pooling. We have demonstrated state-of-the-art results on four popular benchmarks and sensible gains on powerful ResNet-50.
\vspace{-0.3cm}
\begin{appendices}

\section{Derivatives of Average, Gamma and MaxExp functions}
\label{app:der}
{\noindent Let} $\mPhi=[\vphi_1,\cdots,\vphi_N]\!\in\!\mbr{d\times N}$, $\mC=[\vc_1,\cdots,\vc_N]\!\in\!\mbr{Z'\!\times N}$, and some class. loss $\ell(\mPsi,\mW)$, where $\mPsi\!\in\!\semipd{d+Z'}\!$ (or $\spd{}$) and $\mW$ are our descriptor and a hyperplane. 
Eq. \eqref{eq:pn_simple1} yields:
\begin{align}
& \frac{\partial\sum_n\!\vphibar_n\vphibar_n^T}{\partial \phi_{m'n'}}\!=\!
\left[\begin{array}{cc}
\vj_{m'}\vphi_{n'}^T\!+\!\vphi_{n'}\vj_{m'}^T & \vj_{m'}\vc_{n'}^T \\
\vc_{n'}\vj_{m'}^T & [0]_{Z'\!\times Z'\!}  \\
\end{array}\right],
\label{eq:der_auto}
\end{align}
where $[0]_{Z'\!\times Z'\!}$ denotes array of size $Z'\!\times Z'\!$ filled with zeros. 

\vspace{0.05cm}
{\noindent\textbf{Average pooling}} is set by $\tG(\mM)\!=\!\mM$ and $\mD\!=\!\vOnes\vOnes^T$ so that $\mPsi\!=\!\mM\!=\!\frac{1}{N}\sum_n\!\vphibar_n\vphibar_n^T$. Thus, the full derivative becomes:
\begin{align}
& \!\!\!\text{\scriptsize $\sum\limits_{k,l}\frac{\partial \ell(\mPsi,\mW)}{\partial  \Psi_{kl}}\frac{\partial \Psi_{kl}}{\partial \mPhi}=\frac{2}{N}\sym\Big(\frac{\partial \ell(\mPsi,\mW)}{\partial  \mPsi}\!\odot\!\mD\Big)_{(1:d,:)}
\left[\!\!\begin{array}{c}
\mPhi\\
\mC
\end{array}\!\!\right]$}.
\label{eq:der_auto2}
\end{align}

\vspace{0.05cm}
{\noindent\textbf{Gamma pooling}} is set by $\mPsi\!=\!\tG(\mM)\!=\!(\lambda\!+\!\mM)^\gamma$, where rising $\mM$ to the power of $\gamma$ is element-wise and $\lambda$ is a reg. constant. Thus, we obtain:
\begin{align}
&\!\!\!\frac{\partial\mPsi}{\partial\phi_{m'n'}} =\frac{1}{N}\gamma\big(\lambda\!+\!\mM\big)^{\gamma-1}\!\odot\frac{\partial\sum_n\!\vphibar_n\vphibar_n^T}{\partial \phi_{m'n'}}.\!\!
\end{align}
The derivative is given by Eq. \eqref{eq:der_auto2} if $\mD\!=\!\gamma\big(\lambda\!+\!\mM\big)^{\!\gamma-1}$. 
\comment{
\begin{align}
&\!\!\!\!\!\!\sum\limits_{k,l}\frac{\partial \ell(\mPsi,\mW)}{\partial  \Psi_{kl}}\frac{\partial \Psi_{kl}}{\partial \mPhi}=\\
&\quad\frac{2\gamma}{N}\sym\left(\frac{\partial \ell(\mPsi,\mW)}{\partial  \mPsi}\!\odot\!\big(\lambda\!+\!\mM\big)^{\!\gamma-1}\right)_{(1:d,:)}\!
\left[\!\!\begin{array}{c}
\mPhi\\
\mC
\end{array}\!\!\right],\nonumber
\end{align}
where $\mM_{(1:d,1:n)}$ denotes a MATLAB style operator selecting sub-matrix $\mM'\!\in\!\mbr{d\times n}$ from $\mM$ such that $\mM'_{d'n'}\!=\!\mM_{d'n'}, \forall d'\!\!=\!1,\cdots,d,\;n'\!\!=\!1,\cdots,n$.
}

\vspace{0.05cm}
{\noindent\textbf{MaxExp pooling}} $\mPsi\!=\!\tG(\mM)\!=\!1\!-\!(1\!-\!\mM/(\trace(\mM)+\lambda))^\eta$ has the derivative given by  Eq. \eqref{eq:der_auto2} with the following $\mD$:
%
\comment{
\begin{align}
& \!\!\!\!\frac{\partial\mPsi}{\partial \eta}\!=\!-\!\left(1\!-\!\frac{\mM}{\trace(\mM)+\lambda}\right)^\eta\!\!\odot\left(\log\left(1\!-\!\frac{\mM}{\trace(\mM)+\lambda}\right)\right)^{-1}\!\!\!\!,\\
& \!\!\!\!\left[\frac{\partial\Psi_{m'n'}}{\partial M_{m'n'}}\right]_{
\begin{array}{c}
\!\!\!\scriptscriptstyle(m'\!,n')\in\!\!\!\!\\[-5pt]
\!\!\scriptscriptstyle\idx{d}\times\idx{d}\!\!\!\!
\end{array}
}\!\!\!\!\!\!\!\!=\eta\left(1\!-\!\frac{\mM}{\trace(\mM)+\lambda}\right)^{\eta-1}\nonumber\\
&\qquad\qquad\qquad\odot\left(\frac{1}{\trace(\mM)\!+\!\lambda}\!-\!\frac{\mM\!\odot\!\mIdent}{\left(\trace(\mM)\!+\!\lambda\right)^2}\right),
\end{align}
while the matrix form of this derivative becomes:
}
\comment{
\begin{align}
& \!\!\!\!\frac{\partial\mPsi}{\partial\phi_{m'n'}}\!=\!-\frac{\eta}{N}\left(1\!-\!\frac{\mM}{\trace(\mM)+\lambda}\right)^{\eta-1}\\
&\odot\left(\frac{1}{\trace(\mM)\!+\!\lambda}\!-\!\frac{\mM\!\odot\!\mIdent}{\left(\trace(\mM)\!+\!\lambda\right)^2}\right)\!\odot\!\,\frac{\partial\sum_n\!\vphibar_n\vphibar_n^T}{\partial \phi_{m'n'}},\nonumber
\end{align}
}
\begin{align}
& \!\!\!\!\text{\scriptsize $\mD\!=\!\eta\left(1\!-\!\frac{\mM}{\trace(\mM)+\lambda}\right)^{\eta-1}\!\!\!\!\!\!\!\odot\mT\text{ and }\; \mT\!=\!\left(\frac{1}{\trace(\mM)\!+\!\lambda}\!-\!\frac{\mM\!\odot\!\mIdent}{\left(\trace(\mM)\!+\!\lambda\right)^2}\right)$},
\end{align}
%
%
\comment{
\begin{align}
&\!\!\!\!\!\!\!\!\sum\limits_{k,l}\frac{\partial \ell(\mPsi,\mW)}{\partial  \Psi_{kl}}\frac{\partial \Psi_{kl}}{\partial \mPhi}\!=\!-\frac{2\eta}{N}\sym\!\left(\frac{\partial \ell(\mPsi,\mW)}{\partial  \mPsi}\!\odot\! \Big(\!1\!-\!\frac{\mM}{\trace(\mM)\!+\!\lambda}\!\Big)^{\eta-1} \right.\nonumber\\
&\quad\left.\odot\Big(\frac{1}{\trace(\mM)\!+\!\lambda}\!-\!\frac{\mM\!\odot\!\mIdent}{\left(\trace(\mM)\!+\!\lambda\right)^2}\Big)\right)_{(1:d,:)}\!
\left[\!\!\begin{array}{c}
\mPhi\\
\mC
\end{array}\!\!\right],
\end{align}
}
where multiplication $\odot$, division, rising to the power {\em etc.} are all element-wise operations.

\section{Derivatives of SigmE and AsinhE pooling}
\label{app:der2}

\vspace{0.05cm}
{\noindent\textbf{SigmE pooling}} is set by $\mPsi\!=\!\tG(\mM)\!=\!\frac{2}{1\!+\!\expl{-\eta'\mM}}\!-\!1$ or trace-normalized $\frac{2}{1\!+\!\expl{\frac{-\eta'\mM}{\trace(\mM)+\lambda}}}\!-\!1$. The first expression yields: 
\begin{align}
& \!\!\!\!\frac{\partial\mPsi}{\partial\phi_{m'n'}}\!=\!\frac{1}{N}\frac{2\eta'\expl{-\eta'\mM}}{(1+\expl{-\eta'\mM})^2}\odot(\vj_{m'}\vphi_{n'}^T\!+\!\vphi_{n'}\vj_{m'}^T),
\end{align}
where multiplication $\odot$, division, and exponentiation are all element-wise operations.

\vspace{0.05cm}
{\noindent\textbf{AsinhE pooling}} is set by $\mPsi\!=\!\tG(\mM)\!=\!\arcsinh(\gamma'\!\mM)\!=\!\log(\gamma'\!\mM+\sqrt{1+{\gamma'}^2\!\mM^2})$ which yields the following:
\begin{align}
& \!\!\!\!\!\frac{\partial\mPsi}{\partial\phi_{m'n'}}\!=\!\frac{1}{N}\frac{\gamma'}{\sqrt{{\gamma'}^2\mM^2+1}}\odot(\vj_{m'}\vphi_{n'}^T\!+\!\vphi_{n'}\vj_{m'}^T),
\end{align}
where multiplication $\odot$, division, square root and the square are all element-wise operations.

For SigmE, trace-normalized SigmE and AsinhE pooling methods, 
\comment{
Then, for $\mPhi=[\vphi_1,\cdots,\vphi_N]\!\in\!\mbr{d\times N}$ and $\mC=[\vc_1,\cdots,\vc_N]\!\in\!\mbr{Z'\!\times N}$, we obtain the following expression:
\begin{align}
& \!\!\sum\limits_{k,l}\frac{\partial \ell(\mPsi,\mW)}{\partial  \Psi_{kl}}\frac{\partial \Psi_{kl}}{\partial \mPhi}=\frac{2}{N}\sym\left(\frac{\partial \ell(\mPsi,\mW)}{\partial  \mPsi}\!\odot\! 
\mD
\right)_{(1:d,:)}
\left[\!\!\begin{array}{c}
\mPhi\\
\mC
\end{array}\!\!\right],
\label{eq:general_der_add_pn}
\end{align}
}
the final derivatives are given by Eq. \eqref{eq:der_auto2} with the following $\mD$, respectively: 
\begin{align}
&\!\!\!\!\!\!\text{\scriptsize $\mD\!=\!\frac{2\eta'\expl{-\eta'\mM}}{(1\!+\!\expl{-\eta'\mM})^2}$}\text{ or } \text{\scriptsize $\mD\!=\!\frac{2\eta'\expl{\frac{-\eta'\mM}{\trace(\mM)+\lambda}}}{\big(1\!+\!\expl{\frac{-\eta'\mM}{\trace(\mM)+\lambda}}\big)^2}\!\odot\mT$} \text{ and } \text{\scriptsize $\mD\!=\!\frac{\gamma'}{\sqrt{{\gamma'}^2\mM^2\!+\!1}}$}.
\label{eq:additional_pn_matrices_d}
\end{align}
Moreover, for  SigmE and AsinhE we allow $\beta$-centering so that $\vphi_n\!:=\!\vphi_n\!-\!\beta\vmu$ and $\vphi^*_n\!:=\!\vphi^*_n\!-\!\beta\vmu^*\!$. Thus, the derivative of this substitution has to be included in the chain rule.

\section{Derivatives of Spectral Gamma and MaxExp$\!\!$}
\label{app:der3}

\vspace{0.05cm}
{\noindent\textbf{Gamma pooling}} has derivative which can be solved by the SVD back-propagation or the Sylvester equation if $\gamma\!=0.5$:
\begin{align}
&\!\!\text{\scriptsize $2\res\!\Big(\sym\Big(\frac{\partial \ell(\mPsi,\mW)}{\partial  \mPsi}\Big)^T_{(:)}\!\!\mM^*\!\Big)_{d\!+\!Z'\!\times d\!+\!Z'\!} \text{ and } \mM^*\!\!\!\!=\!(\mIdent\!\otimes\!\mM^{\frac{1}{2}}\!\!+\!\!\mM^{\frac{1}{2}}\!\otimes\!\mIdent)^{\dagger}$},
\label{eq:sylv}
\end{align}
where $\otimes$ and $\dagger$ are the Kronecker product and the pseudo-inverse. Matrix vectorization and reshaping to the size $m\!\times\!n$ are denoted by $(:)$ and $\res(\mX)_{m\!\times\!n}$.

\vspace{0.05cm}
{\noindent\textbf{MaxExp}} has a closed-form derivative which requires the following chain rule:
\begin{align}
& \!\!\!\!\!\text{\scriptsize $\frac{\partial\mygthree{\mM}}{\partial\ M_{kl}}=\!\frac{1}{\trace(\mM)}\sum\limits_{n\!=\!0}^{{\eta}-1}\left(\mIdent\!-\!\frac{\mM}{\trace(\mM)}\right)^n\!\left(\mJ_{kl}-\frac{\mM}{\trace(\mM)}\sIdent_{kl}\right)\left(\mIdent\!-\!\frac{\mM}{\trace(\mM)}\right)^{{\eta}-1-n}\!\!\!\!\!\!\!\!\!\!\!\!\!$}.
\label{eq:binom_mat_der}
\end{align}

\end{appendices}


{\small

}

\end{document}